\DeclarePairedDelimiter\abs{\lvert}{\rvert}%
\DeclarePairedDelimiter\norm{\lVert}{\rVert}%
\newtheorem{theorem}{Theorem}[section]
\newtheorem{lemma}[theorem]{Lemma}
\newtheorem{definition}[theorem]{Definition}
\newtheorem{assumption}[theorem]{Assumption}
\DeclareMathOperator*{\argmin}{arg\,min}
\DeclareMathOperator*{\sgn}{\mathbf{sgn}}
\DeclareMathOperator*{\tr}{\text{tr}}
\newif\iffull
\newif\ifshowauthor
\title{Nonsmooth Frank-Wolfe using Uniform Affine Approximations}
\author{Edward Cheung \phantom{{}\and{}} Yuying Li\\ 
Cheriton School of Computer Science, University of Waterloo, Waterloo, Canada  \\
\{eycheung, yuying\}@uwaterloo.ca
}
\begin{document}

\maketitle

\begin{abstract}

Frank-Wolfe methods (FW) have gained significant interest in the machine learning community due to its ability to efficiently solve large problems that admit a sparse structure (e.g. sparse vectors and low-rank matrices). However the performance of the existing FW method hinges on the quality of the linear approximation. This typically restricts FW to smooth functions for which the approximation quality,  indicated by a  global curvature measure, is reasonably good. 

In this paper, we propose a modified FW algorithm amenable to nonsmooth functions by optimizing for approximation quality over all affine functions given a neighborhood of interest. We analyze theoretical properties of the proposed algorithm and demonstrate that it overcomes many issues associated with existing methods in the context of nonsmooth low-rank matrix estimation.
\end{abstract}

\section{Introduction}
We are interested in solving problems of the form,
\begin{equation*}
\min_{X \in \mathcal{D}} f(X) ~ s.t. ~ \norm{X}_{\tr} \leq \delta
\end{equation*}
where the trace norm $\norm{X}_{\tr} = \sum \sigma_{i}(X)$, is the sum of the singular values of $X$. This problem is well studied in the case where $f$ is a smooth convex function. For example, in matrix completion,  many efficient algorithms have been proposed, including Frank-Wolfe \cite{thesisMJ}, active set methods \cite{hsieh2014nuclear}, and proximal methods \cite{parikh2014proximal}. 

Recently, there has also been interest in solving the trace norm constrained problem where the objective function is not differentiable, e.g.,
\begin{equation} \label{RMC}
\min_{X: \norm{X}_{\tr} \leq \delta} f(X) \equiv L(X) + \lambda_{1}\norm{X}_{1}.
\end{equation}
where $L(X)$ is an empirical loss function. Problem \eqref{RMC} has been found useful  \cite{richard2012estimation} for sparse covariance estimation and graph link prediction, for which solutions are expected to exhibit simultaneously sparse and low-rank structure. For problem \eqref{RMC}, proximal methods will likely fail to scale due to the full SVD required at each iteration.  In addition, the active set method in \cite{hsieh2014nuclear} utilizes second order information, making it unclear how to develop a scalable algorithm when the function is not differentiable, let alone twice differentiable.

FW appears to address these issues, only requiring first order information and one singular vector pair calculation per iteration. However, the linear minimization problem is no longer straightforward if the $\ell_{1}$ norm is added to the constraint set. Thus, we propose a variant of the FW algorithm to address nonsmooth objectives, focusing especially on low-rank matrix estimation problems. Nondifferentiability in the objective function often leads to an unbounded curvature constant, and standard convergence analysis can no longer be applied. Moreover, it becomes unclear how to  define the linear approximation appropriately since choosing an arbitrary subgradient often leads to inadequate local approximations, leading to poor empirical results.

To address these issues, we replace the traditional linear minimization problem by a Chebyshev uniform affine approximation. This modification allows for a well-defined linear optimization problem even when the objective is nonsmooth or has unbounded curvature. We demonstrate experimentally that this carefully selected linear minimization leads to significant improvement over a variety of matrix estimation problems, such as sparse covariance estimation, graph link prediction, and $\ell_{1}$-loss matrix completion.

\section{Background}
\subsection{Frank-Wolfe for Nonsmooth Functions}
The FW algorithm is a first-order method for solving
$\min_{x \in \mathcal D} f(x)$, where
$f(x)$ is a convex function and $\mathcal D$ is a convex and compact set \cite{frank1956algorithm}. 
The algorithm is motivated by replacing the objective function $f(x)$ with its first-order Taylor expansion and solving the first-order surrogate on the domain $\mathcal{D}$. Formally, the Frank-Wolfe algorithm solves the following linear minimization problem at the iteration $k$,
\begin{equation*}
s^{(k)} \coloneqq \argmin_{s \in \mathcal{D}} f(x^{(k)}) + \langle s - x^{(k)}, f(x^{(k)}) \rangle.
\end{equation*}
The next iterate is then given by a convex combination of $s^{(k)}$ and $x^{(k)}$, which guarantees that the resulting iterate remains feasible, assuming the initial $x_{0} \in \mathcal{D}$.

For smooth convex functions, the Frank-Wolfe algorithm is known to converge at a rate of $O(1/k)$. The convergence analysis relies on the  concept of \emph{curvature constant} \cite{clarkson2010coresets,thesisMJ}, which measures the quality of the linear approximation.

Let $f$ be a convex and differentiable function $f: \mathbb{R}^{n} \rightarrow \mathbb{R}$, and let $\mathcal{D}$ be a convex and compact subset of $\mathbb{R}^{n}$. Then, the curvature constant $C_f$ is defined as
\begin{equation*}
\label{def:curvature_constant}
C_{f} \coloneqq \sup_{\substack{x,s \in \mathcal{D}\\ \alpha \in [0,1] \\ y = x + \alpha(s - x)}} \frac{1}{\alpha^{2}}( f(y) - f(x) - \langle y - x, \nabla f(x) \rangle ).
\end{equation*}

 When the value of $C_{f}$ is large, it suggests that there are regions in $\mathcal{D}$ where the local linear approximation is poor. It can be seen in \cite{clarkson2010coresets,thesisMJ} that the curvature constant is closely related to the Lipschitz constant of the gradient. However, if $f$ is not differentiable, even for simple functions such as $f(x) = \lambda\norm{x}_{1}$, it is easy to verify that $C_{f}$ is unbounded regardless of the choice of subgradient. The convergence properties for nonsmooth functions become difficult to analyze, and in practice we see that using an arbitrary subgradient for FW typically performs poorly.

\subsection{Existing Work}
To motivate better local approximations, \cite{white1993extension} employs an approximate subdifferential, $T(x, \epsilon)$, which considers all the subgradients for any $y$ in an $\epsilon$-neighborhood of $x$. This idea attempts to find an appropriate linearization of the objective that is good within a specified neighborhood of interest. However, arbitrary approximate subgradients can still perform poorly in practice if they are not chosen carefully. In \cite{ravi2017deterministic}, the idea is extended and the authors proposed a generalized curvature constant to best determine which approximate subgradient to choose. However, this leads to an optimization problem which is not generally easy to solve and  it is not obvious how to extend these ideas to \eqref{RMC} efficiently.

In \cite{pierucci2014} and \cite{argyriou2014hybrid}, the objective in \eqref{RMC} is replaced by a smoothed objective. The gradients of the smoothed objective are given by,
\begin{equation*}
[G^{(k)}(X^{(k)})]_{ij} = \nabla L(X^{(k)}) + \begin{cases}
\lambda\sgn(X_{ij}), &\text{if } \abs{X^{(k)}_{ij}} \geq \mu\\
\frac{\lambda}{\mu}X^{(k)}_{ij}, &\text{if }  \abs{X^{(k)}_{ij}} < \mu.
\end{cases}
\end{equation*}
for the Smoothed Composite Conditional Gradient (SCCG) algorithm in \cite{pierucci2014}, and 
\begin{equation*}
G^{(k)}(X^{(k)}) = \nabla L(X^{(k)}) - \frac{1}{\beta^{(k)}}X_{k} + \frac{1}{\beta^{(k)}}S(X^{(k)}, \lambda\beta^{(k)})
\end{equation*}
for the Hybrid Conditional Gradient with Smoothing (HCGS) algorithm in \cite{argyriou2014hybrid}, where
\begin{equation*}
S(X^{(k)}, \lambda\beta^{(k)}) = \sgn(X) \odot \max\{\abs{X} -  \lambda\beta^{(k)}, 0\}
\end{equation*}
is the soft-thresholding operator. 
In both cases, the objective is smoothed by parameters $\mu$ and $\beta^{(k)}$ respectively, where the smooth approximation given is the best approximation across all $(1/\mu)$-smooth (or resp. $1/\beta^{(k)}$-smooth) functions. However, determining how smooth the approximation should be is not easy to know a priori and it often varies depending on the iterate. Empirically, we also observe there is a nontrival dependency between the smoothing parameters and convergence rate. At any given iteraiton, if the smoothing parameters are not set appropriately, the algorithm often makes no progress for many iterations.

In \cite{yao2016greedy}, a nonsmooth generalization to rank-one matrix pursuit is proposed which utilizes subgradients in the linearized subproblem. To ensure convergence, the rank-one update at each iteration is replaced by a rank-$k$ variant where $k$ is computed by taking however many leading singular vectors are required to ensure the solution to the rank-$k$ subproblem is not too far from the subgradient in norm. Since the subgradients are typically not low rank, the number of singular vectors required can often be very large (possibly requiring a full SVD), and this approach can still fail to scale in similar ways to proximal methods.

The work in \cite{odor2016frank} also considers Frank-Wolfe methods when the curvature constant is unbounded. However, the algorithm is is specific to the phase retrieval problem for which the objective is still differentiable, simplifying the analysis.

Lastly, the Generalized Forwards-Backwards (GenFB) algorithm was introduced to solve \eqref{RMC} by alternating between proximal steps using the trace and $\ell_{1}$ norm \cite{richard2012estimation}. As alluded to earlier, these algorithms tend to scale poorly due to the full SVD required at each iteration.

\subsection{Achieving a better linear approximation}
 The previous work on FW for nonsmooth minimization \eqref{RMC} shares a common idea, i.e., finding a meaningful way to define an appropriate linear optimization subproblem in a scalable manner. We consider directly minimizing the approximation error over all possible affine functions over a neighborhood specified carefully for the Frank-Wolfe steps. We will show that under modest assumptions, the linear subproblems we propose will be simple to solve and do not rely on specifying the desired level of smoothness, as required by methods discussed in the previous section.

\begin{definition}\label{def:ue}
Given some $r > 0$, the \textbf{uniform affine approximation} to a function $f: \mathbb{R}^{m \times n} \rightarrow \mathbb{R}$ is defined as $\ell(Y) = b + \langle Y - X, \xi\rangle$ where,
\begin{equation*}
(\xi, b) \in \argmin_{(\xi, b)}\max_{Y \in \bar{\mathcal{B}}_{\infty}(X, r)} \abs*{f(Y) - b - \langle Y - X, \xi \rangle}
\end{equation*}
and $\bar{\mathcal{B}}_{\infty}(X, r)$ is the closed element-wise infinity norm ball of radius $r$ around $X$.
\end{definition}

This motivates a natural variant of FW where, at each iteration, the linear subproblem using a subgradient is replaced with the uniform affine approximation. In particular, we can view the FW iterates as,
\begin{equation*}
X^{(k+1)} = X^{(k)} - \alpha^{(k)}(X^{(k)} - S)
\end{equation*}
for some $S \in \mathcal{D}$. Thus, $X^{(k+1)} \in \bar{\mathcal{B}}_{\infty}(X^{(k)}, \alpha^{(k)}\mathbf{diam}(\mathcal{D}))$ where $\mathbf{diam}(\mathcal{D}) = \max_{S \in \mathcal{D}}\norm{X^{(k)} - S}_{\infty}$. For FW, this implies that we can restrict our attention to a neighborhood around the current iterate $X^{(k)}$ where the neighborhood has radius $\alpha^{(k)}\mathbf{diam}(\mathcal{D})$.

Specifically, we observe that in FW there exist step size schedules, e.g., $\alpha^{(k)} = 2/(k+2)$, which are independent of the current iterate and guarantee convergence. Our proposed approach is to assume that such a step size schedule is specified \emph{a priori} and to use the uniform affine approximation for the FW subproblems. This allows the linear optimization subproblems to be defined a meaningful way that is related to the FW steps. Moreover, the subproblems no longer require $f$ to have bounded curvature or even to be differentiable.

\section{Frank-Wolfe with Uniform Approximations}
For the proposed uniform approximation approach to be viable, it is important that the uniform affine approximation can be calculated efficiently. We begin by considering real-valued functions and Chebyshev approximations.

\subsection{Chebyshev Approximations}
Given a real-valued function $f$ and an interval $[a,b] \subseteq \mathbf{dom}(f)$, the Chebyshev polynomial, denoted as $p_{d}(x)$, is a polynomial of degree not exceeding $d$ that best approximates $f$ on the interval $[a,b]$ in the uniform sense, 
\begin{equation*}
p_{d}(x) = \argmin_{p \in \Pi_{d}}\max_{a \leq x \leq b} \abs{f(x) - p(x)}
\end{equation*}
where $\Pi_{d}$ is the set of polynomials of degree at most $d$. 

\begin{theorem}[Chebyshev Equioscillation Theorem]\label{thm:cheb}
Let $f$ be a continuous function from $[a,b] \rightarrow \mathbb{R}$ and let $\Pi_{d}$ be the set of polynomials of degree less than or equal to $d$. Then 
\begin{equation*}
g^* = \argmin_{g \in \Pi_{d}}\norm{f - g}_{\infty}
\end{equation*}
if and only if there exists a $c \in \{-1, 1\}$ and $d + 2$ points $\{x_{1},...,x_{d+2}\}$ such that $a \leq x_{1} < ..., < x_{d+2} \leq b$ such that,
\begin{equation*}
f(x_{i}) - g^*(x_{i}) = c(-1)^{i}\norm{f - g^*}_{\infty}.
\end{equation*}
\end{theorem}
Although the equioscillation theorem only applies to a function of one variable, we will show it can also be applied when a function is separable. Specifically, under the separability Assumption \ref{asm:sep} below, we can construct the best uniform affine approximation by determining the best affine approximation on an interval for each component function. 

\begin{assumption}
\label{asm:sep}
Assume that $f: \mathbb{R}^{m \times n}\rightarrow \mathbb{R}$ can be \textbf{separated} into a sum of component functions, i.e., $f(X) = \sum_{i} f_{ij}(X_{ij})$, where each $f_{ij}:\mathbb{R} \rightarrow \mathbb{R}$.
\end{assumption}

\begin{theorem}
\label{thm:sep_defn}
Suppose $f: \mathbb{R}^{m \times n} \rightarrow \mathbb{R}$ is a continuous function that satisfies Assumption \ref{asm:sep}. For a given $X^{(k)} \in \mathbb{R}^{m \times n}$ and $\tau > 0$, if $\ell_{ij}(Y_{ij})$ is the Chebyshev polynomial of degree 1 for $f_{ij}$ over the interval $[X_{ij} - \tau, X_{ij} + \tau]$, then the function $\ell(Y)  = \sum_{i=1}^{m}\sum_{j=1}^{n}\ell_{ij}(Y_{ij})$ is the uniform affine approximation to $f$.
\end{theorem}

Thus, under Assumption \ref{asm:sep}, it suffices to find the Chebyshev polynomials for the component functions. Furthermore, if each $f_{ij}$ is convex, then there exists a closed form solution for a linear Chebyshev polynomial \cite{davis1975interpolation}.

\section{FWUA and Convergence}
Using the uniform affine approximation, we propose a FW variant with Uniform Approximations (FWUA), which is described in Algorithm \ref{alg:fwua}. The function \texttt{update\_tau} will be described in full in Section \ref{sec:update_tau}, where a specific update rule for $\tau$ will be required to guarantee convergence.

\begin{algorithm}  
    \caption{Frank-Wolfe with Uniform Approximations (\textbf{FWUA})}
  \label{alg:fwua}
  \begin{algorithmic}[1]
  \REQUIRE
  $f$: A function satisfying Assumptions \ref{asm:sep} and \ref{asm:comp}\\
  $\mathcal{D}$: A convex and compact subset of $\mathbb{R}^{m \times n}$\\
  $\epsilon$: Approximation threshold\\
  $K$: Max iteration count
  \STATE Let $X^{(0)} \in \mathcal{D}$.
  \STATE Let $\tau^{0} \gets \mathbf{diam}(\mathcal{D})$.
  \FOR{$k = 0..K$}
  \STATE $\alpha^{(k)} \gets \frac{2}{k+2}$
  \STATE $\tau^{(k)} \gets \texttt{update\_tau}(k, \tau^{(k-1)}, \epsilon,f )$
  \STATE $(\xi^{(k)}, b^{(k)}) \gets \displaystyle\argmin_{\xi, b} \max_{Y \in \bar{\mathcal{B}}_{\infty}(X^{(k)}, \tau^{(k)})} \abs{f(Y) - b - \langle Y - X, \xi \rangle}$
  \STATE $\displaystyle S^{(k)} \gets \argmin_{S} b^{(k)} + \langle S - X^{(k)}, \xi_{k} \rangle$
  \STATE $X^{(k+1)} \gets X^{(k)} + \alpha^{(k)} (S^{(k)} - X^{(k)})$.
  \ENDFOR 
\end{algorithmic}
\end{algorithm}

To establish convergence, subsequently we make the following assumptions.

\begin{assumption}
\label{asm:comp}
Assume that $f$ satisfies Assumption \ref{asm:sep}.
In addition, each component function $f_{ij}$ has the following properties:
\begin{enumerate}
    \item [(a)] $f_{ij}$ is a convex, $L_{ij}$-Lipschitz continuous function.
    \item [(b)] $f_{ij}$ is not differentiable on at most a finite set.
    \item[(c)] If $f_{ij}$ is differentiable at $a$, then it is also twice differentiable at $a$.
\end{enumerate}
\end{assumption}

While the above set of assumptions appears restrictive, our main goal in this work is to efficiently solve \eqref{RMC} in the context of trace-norm constrained matrix estimation problem which has a combination of $\ell_{1}$ and $\ell_{2}$ loss/regularization, for which these assumptions are satisfied. 

\begin{definition}
\label{def:slope_func}
Let $f$ be satisfy Assumption \ref{asm:sep} and \ref{asm:comp}. For a given $\tau>0$,  we define the \textbf{uniform slope function}, $m_{ij}(X_{ij},\tau) :\mathbb{R}\times\mathbb{R}^{+} \rightarrow \mathbb{R}$, which is the slope of the uniform affine approximation for $f_{ij}$ on the interval $[X_{ij} - \tau, X_{ij} + \tau]$.
\end{definition}
The  uniform slope function can be viewed as a surrogate for the gradient which does not rely on differentiability of $f$. 
\begin{theorem}
\label{thm:unif_conv}
Let $f:\mathbb{R}^{m \times n} \rightarrow \mathbb{R}$ be a function that satisfies Assumption \ref{asm:sep} and \ref{asm:comp} and $\mathcal{D}$ be a convex and compact set. Given $\tau>0$, let  $m_{ij}(X_{ij}, \tau)$ be the  uniform slope function for $f_{ij}$ at $X_{ij}$, and let $X_{\min} = \min\{X_{ij} | X = (X_{1,1},...,X_{ij},...,X_{mn})^{\top} \in \mathcal{D}\}$. Let 
 \begin{equation*}
\hat{f}(X, \tau) \coloneqq \sum_{i=1}^{m}\sum_{j=1}^{n}\int_{X_{\min}}^{X_{ij}} m_{ij}(x, \tau)dx + f_{ij}(X_{\min})
    \end{equation*}
Then the following statements hold:
\begin{enumerate}
    \item [(a)]$\hat{f}(X, \tau)$ is convex in $X$,
        \item [(b)] $\nabla_{X}\hat{f}(X, \tau)$ is $L/\tau$-Lipschitz continuous w.r.t. the $\ell_{\infty}$-norm, where $L$ is the max. Lipschitz constant of all $f_{ij}$,
    \item [(c)] $\max_{X \in \mathcal{D}}\abs*{\hat{f}(X, \tau) - f(X)} \leq mn(M + 1)D\Delta_{f}\tau$, where,    
  \begin{align}
  \begin{split}
  \label{eqn:constants}
    M &\coloneqq \text{the maximum number of points}\\
    &\phantom{{}={}}\text{any $f_{ij}$ is not differentiable at }\\
D &\coloneqq \mathbf{diam}(\mathcal{D}), \\
\Delta_{f} &\coloneqq \max\biggl\{\max_{\substack{i,j\\X,Y \in \mathcal{D}\\ f''_{ij} \text{ exists at } X_{ij}\text{ and }Y_{ij}}}\abs*{\frac{f_{ij}''(X_{ij}) - f_{ij}''(Y_{ij})}{2}},\\ &\phantom{{}\max\Biggr\{{}}\max_{\substack{i,j\\X_{ij}, Y_{ij} \in [X_{\min} - \tau, X_{ij} + \tau]\\ g_{ij} \in \partial f_{ij}(X_{ij})\\ h_{ij} \in \partial f_{ij}(Y_{ij})}}2\abs{g_{ij} - h_{ij}}\biggr\}
\end{split}
\end{align}
\end{enumerate}
\end{theorem}

Suppose the neighborhood size at iteration $k$ is given by $\tau^{(k)}$. Theorem \ref{thm:unif_conv} states that as $\tau^{(k)} \rightarrow 0$, the sequence of uniform affine approximations generated by the FWUA algorithm uniformly converges to the original objective. In particular, given a sequence of neighborhood sizes $\{\tau^{(k)}\}$, we consider the sequence of smooth approximations given by
\begin{equation}
\label{eqn:smooth_def}
\begin{split}
   \hat{f}^{(k)}(X) = \sum_{i=1}^{m}\sum_{j=1}^{n}\int_{X_{\min}}^{X_{ij}} m_{ij}(x, \tau^{(k)})dx + f_{ij}(X_{\min}).
\end{split}
\end{equation}
Since $\hat{f}^{(k)}$ is differentiable with a $\frac{L}{\tau^{(k)}}$-Lipschitz gradient, it can be shown that the curvature constant for $\hat{f}^{(k)}$  is bounded,
\begin{equation*}
C_{\hat{f}^{(k)}} \leq \frac{L}{\tau^{(k)}}\mathbf{diam}(\mathcal{D})^{2}
\end{equation*}
see, e.g., \cite{thesisMJ}. Thus, we can leverage standard Frank-Wolfe convergence argument while maintaining an upper bound on the approximation quality of the solution. 

To make these concepts concrete, consider $f(X) = \norm{X}_{1}$ and $\mathcal{D}$ is the trace norm ball of radius $\delta$. Then,
\begin{equation*}
    \hat{f}(X, \tau) = \sum_{ij} \left(\frac{X_{ij}^{2}}{2\tau}  + \frac{\tau}{2} \right)\cdot\mathbbm{1}_{\abs{X_{ij}} <  \tau} + \abs{X_{ij}}\cdot\mathbbm{1}_{\abs{X_{ij}} \geq \tau}
\end{equation*}
where $\mathbbm{1}$ is the indicator function.
Figure \ref{fig:tilde_f} illustrates the component functions.

\begin{figure}[t]
\centering
\includegraphics[height=0.14\textheight]{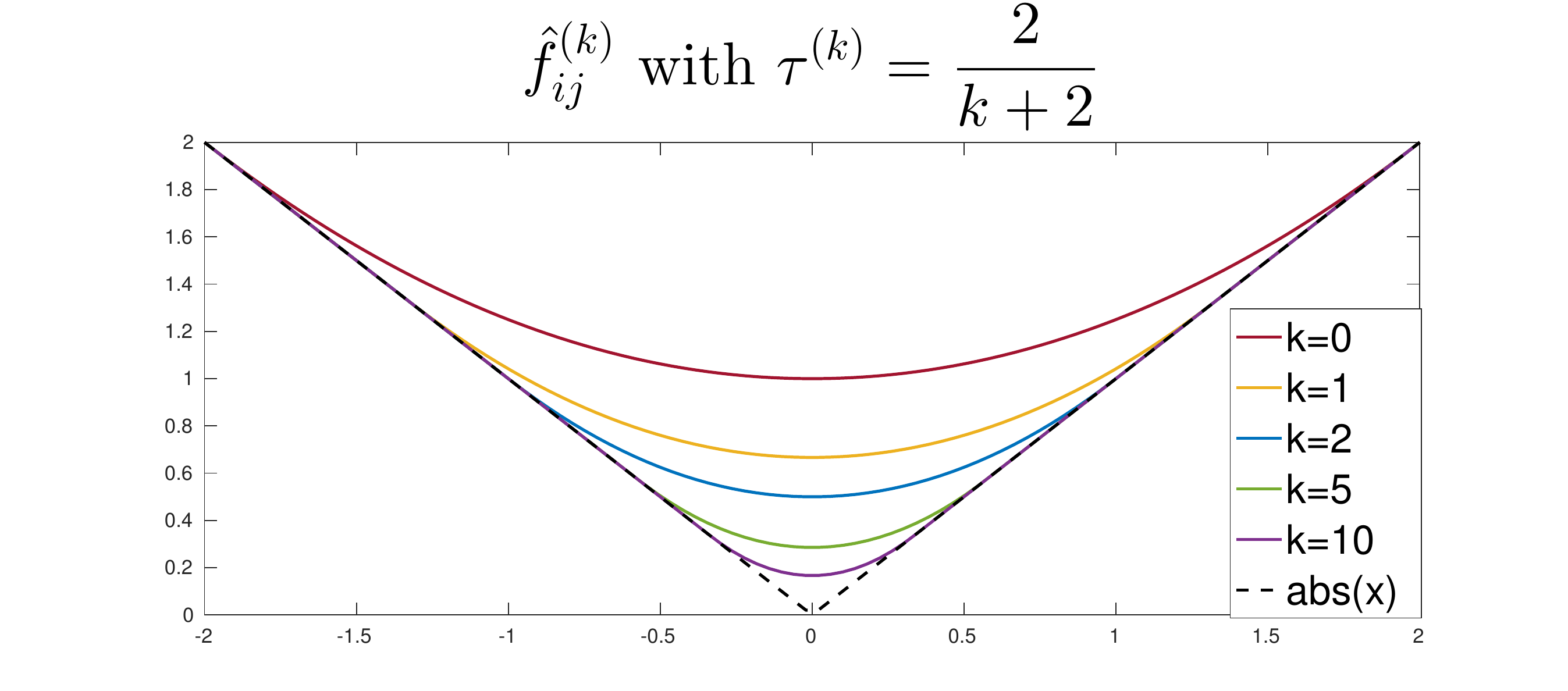}
\caption{The component functions $\hat{f}^{(k)}_{ij}$ with $\tau^{(k)} = 2/(k+2)$ and varying $k$ for $f(X) = \norm{X}_{1}$.}
\label{fig:tilde_f}
\end{figure}

When $f(X) = \norm{X}_{1}$, its uniform affine approximation has an attractive property that $\hat{f} \geq f$. In general, this property may not hold. However from the uniform error bound, there always exists some constant $N^{(k)} \in [0, n(M + 1)D\Delta_{f}\tau^{(k)}]$ such that $\hat{f}^{(k)} + N^{(k)} \geq f$, and the function $\hat{f}^{(k)} + N^{(k)}$ has all the properties listed in Theorem \ref{thm:unif_conv}, except the error bound in \eqref{eqn:constants} becomes $\norm{\hat{f}^{(k)} - f}_{\infty} \leq 2mn(M + 1)D\Delta_{f}\tau^{(k)}$. Thus, we redefine the sequence of approximations as follows.

\begin{definition}
Let $f: \mathbb{R}^{m \times n} \rightarrow \mathbb{R}$ be a function that satisfies Assumptions \ref{asm:sep} and \ref{asm:comp}. The sequence of \textbf{FWUA smooth approximations} are given by
\begin{equation}
\label{def:smooth_apprx_over}
   \tilde{f}^{(k)}(X) = \sum_{i,j}\int_{X_{\min}}^{X_{ij}} m_{ij}(x, \tau^{(k)})dx + f_{ij}(X_{\min}) + N^{(k)}
\end{equation}
where $N^{(k)} \geq 0$ is the smallest number such that $\tilde{f}^{(k)} \geq f$.
\end{definition}

\subsection{Update $\tau$}
\label{sec:update_tau}
The role of $\tau^{(k)}$ in the algorithm will be to reflect the maximum deviation in any component of $X^{(k)}$ after a FW step. That is, we are interested in bounding the quantity $\norm{X^{(k+1)} - X^{(k)}}_{\infty}$. Since the FW steps take the form,
\begin{equation*}
    X^{(k+1)} = X^{(k)} + \alpha^{(k)}(S^{(k)}- X^{(k)})
\end{equation*}
for some $S \in \mathcal{D}$, this leads to a straightforward bound of
\begin{equation*}
    \norm{X^{(k+1)} - X^{(k)}}_{\infty} \leq \alpha^{(k)}\mathbf{diam}(\mathcal{D}).
\end{equation*}
However, setting $\tau^{(k)} = \alpha^{(k)}\mathbf{diam}(\mathcal{D})$ can be overly conservative and lead to poor local approximations. Ideally, we would set $\tau^{(k)} = \norm{S^{(k)} - X^{(k)}}_{\infty}$ to have the smallest value for $\tau^{(k)}$ which ensures that $S^{(k)}$ is in the neighborhood of interest. This is not possible since $S^{(k)}$ depends on the choice of $\tau^{(k)}$. In implementation, we estimate $\tau^{(k)}$ by the previous FW steps and consider the update,
\begin{equation}
\label{eqn:update_rule}
    \tau^{(k+1)} \gets \alpha^{(k)}\max_{j \in \{0,...,4\}}\norm{X^{(k-j)} - S^{(k-j)}}_{\infty}.
\end{equation}

Another concern with updating $\tau^{(k)}$ is that we cannot allow the function $\tilde{f}^{(k)}$ to become arbitrarily close to $f$, since the Lipschitz constant for $\tilde{f}^{(k)}$ can also grow arbitrarily large given a nonsmooth $f$. However, if only an $\epsilon$-accurate solution is desired with $\epsilon$ specified a priori, one can stop refining the approximation $\tilde{f}$ at some iteration $k'$ since there exists an explicit upper bound on the approximation error. The FWUA algorithm can then proceed as a standard FW algorithm on the smoothed function defined by iteration $k'$.

Specifically, the uniform error bound from Theorem \ref{thm:unif_conv} and a step size of $\alpha^{(k)} = 2/(k+2)$ guarantees that
\begin{equation*}
    \frac{4mn(M+1)D^{2}\Delta_{f}}{k+2} \leq \frac{\epsilon}{2} \text{ when } k \geq \frac{8mn(M+1)D^{2}\Delta_{f}}{\epsilon} - 2.
\end{equation*}
At iteration $k' = \frac{8mn(M+1)D^{2}\Delta_{f}}{\epsilon} - 1$, we stop refining the neighborhoods and $\tau^{(k)} = \tau^{(k')}$ for all $k \geq k'$. The \texttt{update\_tau} function is be formalized in Algorithm \ref{alg:updatetau}.

\begin{algorithm}[h!]  
    \caption{\texttt{update\_tau}}
  \label{alg:updatetau}
  \small
  \begin{algorithmic}[1]  
  \REQUIRE
  $k$: Iteration number\\
  $\tau^{(k-1)}$: Previous neighborhood size\\
  $\epsilon$: accuracy tolerance\\
  $f$: Original function to optimize\\  
  $\{X^{(i)}\}_{i=0}^{k-1}$: Sequence of previous Frank-Wolfe iterates\\
  $\{S^{(i)}\}_{i=0}^{k-1}$: Solutions to the Frank-Wolfe linear subproblems of previous iterates\\
  $[m,n, M, D, \Delta_{f}] \gets$ parameters of $f$ as described in \eqref{eqn:constants}.
  \STATE $k' \gets \frac{8mn(M+1)D^{2}\Delta_{f}}{\epsilon} -1$.
  \IF{$k > k'$}
  \STATE \texttt{return} $\tau^{(k-1)}$
  \ELSE
  \STATE \texttt{return} $\tau^{(k)} \gets \frac{2\max_{j \in \{0,...,4\}}\norm{X^{(k-j)} - S^{(k-j)}}_{\infty}}{k+2}$
  \ENDIF
\end{algorithmic}
\end{algorithm}

The rationale is that once the approximation is sufficiently accurate, i.e., the approximation error is no bigger than $\epsilon/2$, then standard Frank-Wolfe analysis can be applied to bound the suboptimality of the smoothed problem defined by iteration $k'$ by $\epsilon/2$. We formalize this statement in Theorem \ref{thm:final_conv}.

\begin{theorem}
\label{thm:final_conv}
Let $f:\mathbb{R}^{m \times n} \rightarrow \mathbb{R}$ satisfy Assumptions \ref{asm:sep} and \ref{asm:comp}, $X^{*} \in \argmin_{X\in \mathcal{D}} f(X)$, and let $M, D, \Delta_{f}$ be the constants defined in \eqref{eqn:constants}. Then for any $\epsilon > 0$, the iterates $\{X^{(k)}\}$ of Algorithm \ref{alg:fwua} using $\alpha^{(k)} = 2/(k+2)$ satisfy
\begin{equation*}
f(X^{(k)}) - f(X^{*}) < \epsilon
\end{equation*}
when
\begin{equation*}
    k \geq k' + \frac{8C_{\tilde{f}^{(k')}}}{\epsilon} \text{ with }  k' = \frac{8mn(M+1)D^{2}\Delta_{f}}{\epsilon}.
\end{equation*}
\end{theorem}

\section{Experimental Results}
\subsection{Sparse and Low-Rank Structure}
To highlight benefits of the proposed FWUA, we first compare it against other state-of-the-art solvers for the problem,
\begin{equation*}
\label{eqn:sp_p_lr}
\min_{X: \norm{X}_{\tr} \leq \delta} \norm{P_{\Omega}(X - Y)}_{F}^{2} + \lambda_1 \norm{X}_{1}.
\end{equation*}
where $Y$ is the given data, $\Omega = \{(i,j)\}$ is the set of observed indices, and $P_{\Omega}(\cdot)$ projects the loss onto $\Omega$. For all experiments, the FW based methods terminate after 1000 iterations, and all other methods use default stopping criteria suggested by the authors. 

We compare FWUA with  GenFB \cite{richard2012estimation} , HCGS \cite{argyriou2014hybrid}, and SCCG \cite{pierucci2014}. For each  problem instance, the same  $\lambda_1$ value  is used by all methods and this value is tuned, by searching over a grid of parameter values, to yield the best test performance for GenFB. The bound $\delta$ for the trace norm, is then set to the trace norm of the solution given by GenFB. For  SCCG, the smoothing parameter $\mu$ is additionally tuned to yield the smallest average objective value. HCGS sets $\beta^{(k)} = 1/\sqrt{k+1}$ as suggeseted by the authors. We also compare the limiting behavior for SCCG when $\mu = 0$. This corresponds to a specific subgradient, denoted as SCCG (SG).\footnote{The final parameters used are omitted due to space, but will appear on the arXiv.}

\subsubsection{Sparse Covariance Estimation}
We follow the synthetic experiments described in \cite{richard2012estimation}, where the goal is to recover a block diagonal matrix. We consider square matrices where $n = 750:250:2000$ (here we use MATLAB notation). The true underlying matrix is generated with 5 blocks, where the entries are i.i.d. and uniformly sampled from $[-1,1]$. Gaussian noise, $\mathcal{N}(0, \sigma^{2})$ is then added with $\sigma^{2} = 0.2$. For this experiment, all entries in $\Omega$ are observed.

In Figure \ref{fig:cov_conv}, we only present the convergence results for $n = 2000$ due to space, but the patterns are similar throughout. We remark that since the GenFB algorithm is a regularized algorithm, the intermediate iterates are not feasible for the constrained problem used for FW. Only the performance of the solution at convergence is compared.
\begin{figure}[!h]
\centering
    \begin{subfigure}[b]{0.22\textwidth}
        \centering
        \includegraphics[height=0.15\textheight]{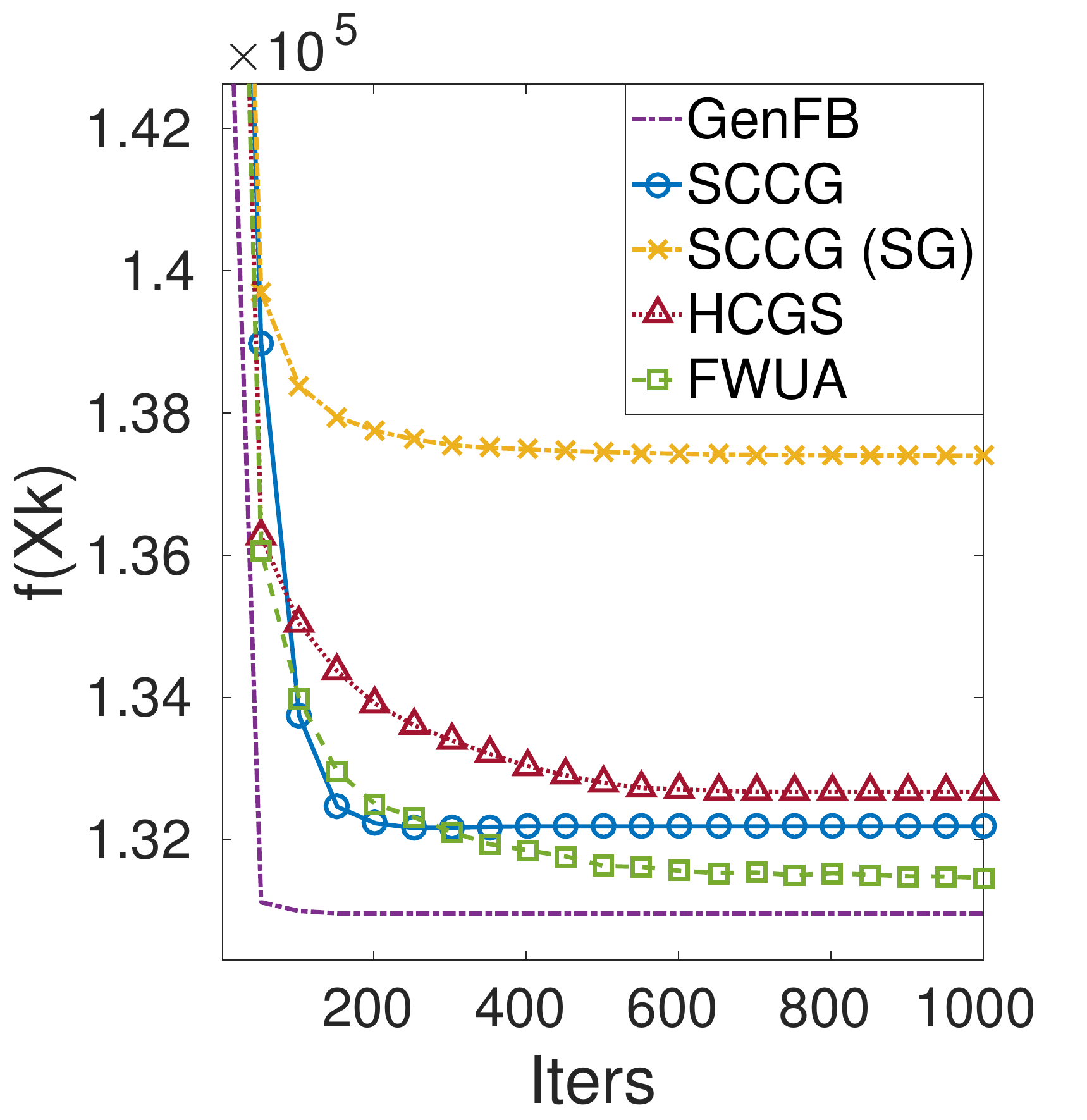}        
    \end{subfigure}%
    ~
    \begin{subfigure}[b]{0.22\textwidth}
        \centering
        \includegraphics[height=0.15\textheight]{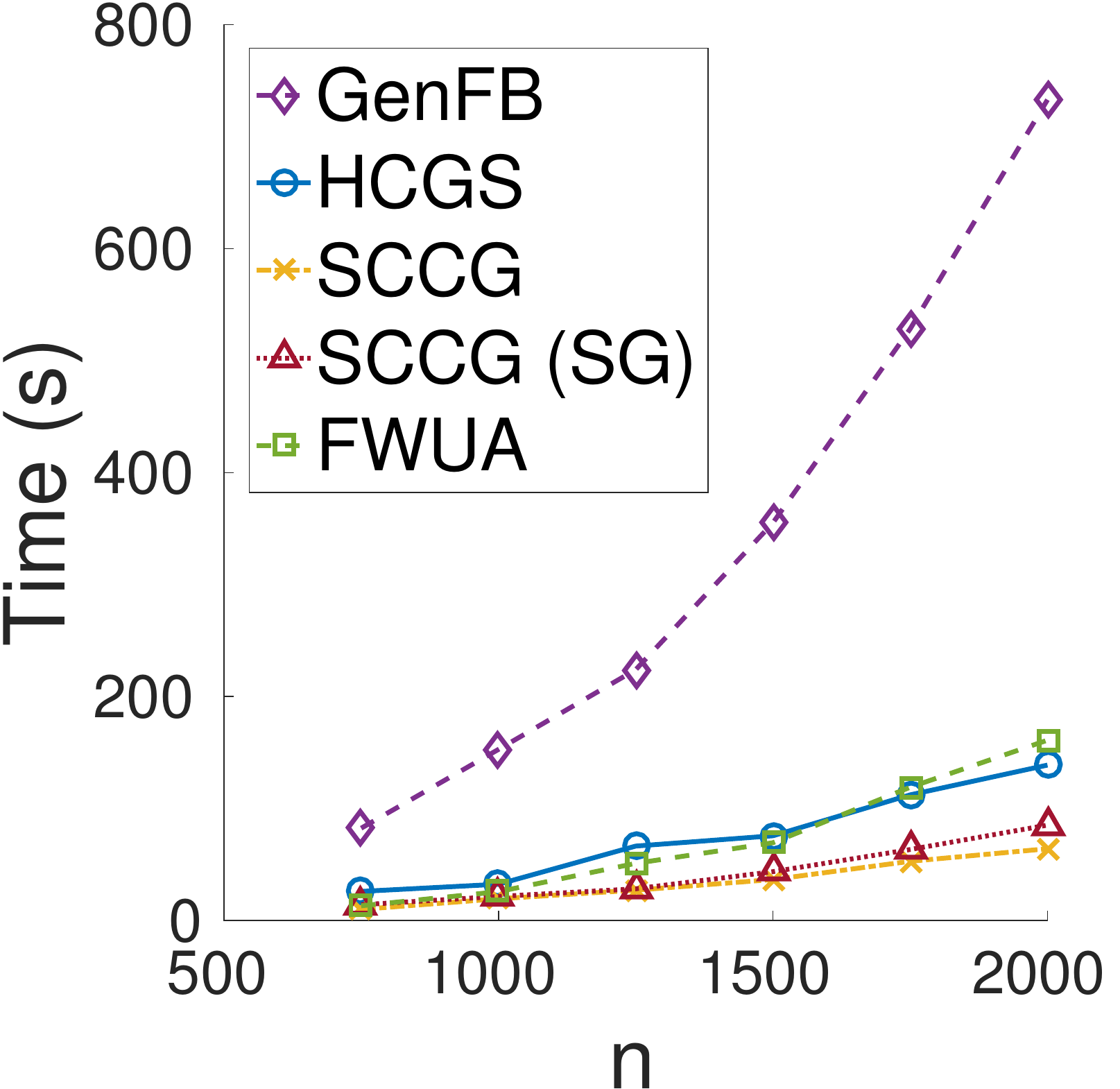}       
    \end{subfigure}%
    \caption{Sparse covariance estimation.}
    \label{fig:cov_conv}
\end{figure}

\begin{figure}[ht]
\centering
\includegraphics[width=0.5\textwidth]{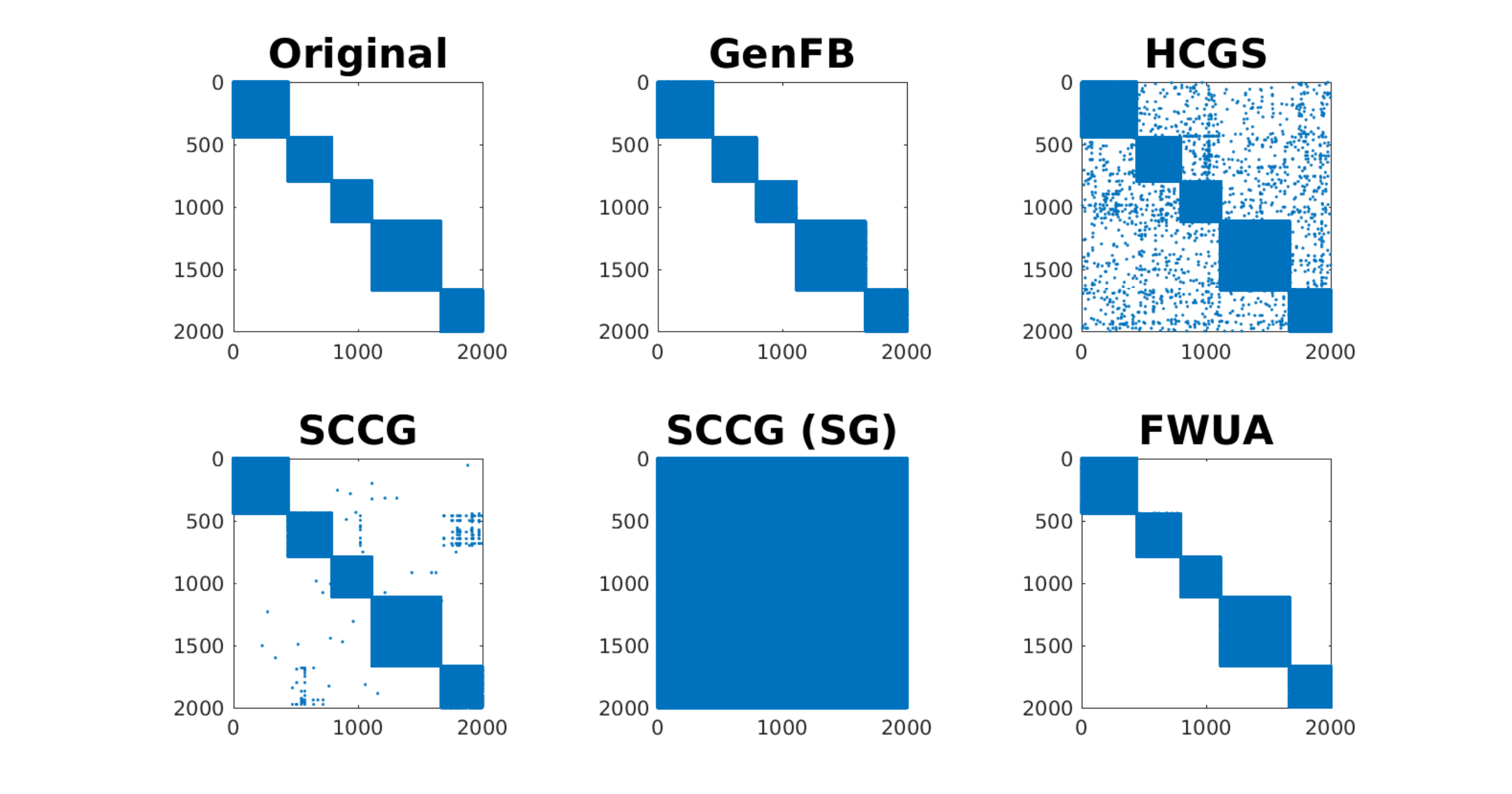}   
\caption{An example sparsity pattern at convergence, values thresholded at $0.01\norm{X}_{\infty}$.}
\label{fig:sparse}
\end{figure}

\subsubsection{Graph Link Prediction}
Next  we consider  predicting links in a noisy social graph. The input data is a matrix corresponding to an undirected graph, where the entry $A_{ij} = 1$ indicates that user $i$ and $j$ are friends and $A_{ij} = 0$ otherwise. We consider the Facebook dataset from \cite{snapnets} which consists of a graph with 4,039 nodes and 88,234 edges, and assume 50\% of the entries are observed. The goal is to recover the remaining edges in the graph. Additionally, each entry $A_{ij}$ is flipped with probability $\sigma  \in \{0, 0.05, 0.1\}$, potentially removing or adding labels to the graph. We report the AUC performance measure of the link prediction on the remaining entries of the graph as well as the average CPU time over 5 random initializations summarized in Table \ref{tbl:graph} across all levels of $\sigma$.

\renewcommand{\arraystretch}{1.1}
\begin{table}[!t]
\centering
\scriptsize
\begin{tabular}{c|lrrrrr}
\specialrule{.2em}{.1em}{.1em}
$\sigma$ & & GenFB & SCCG & SCCG (SG) & HCGS & FWUA\\
\specialrule{.2em}{.1em}{.1em}
  \multirow{2}{*}{0} & AUC & 0.968 & 0.949 & 0.873 & 0.868 & \textbf{0.972}\\ 
 & Time (s) & 3746.30 & 1127.91 & 288.46 & \textbf{263.27} & 449.89 \\
   \hline
   \multirow{2}{*}{0.05} & AUC & \textbf{0.829} & 0.820 & 0.799 & 0.806 & \textbf{0.829}\\ 
 & Time (s) & 4024.96 & \textbf{332.23} & 378.68 & 407.87 & 408.37 \\
   \hline
   \multirow{2}{*}{0.10} & AUC & 0.715 & 0.708 & 0.707 & 0.708 & \textbf{0.732}\\
  & Time (s) & 4006.92 & \textbf{363.18} & 403.27 & 420.22 & 481.82 \\
 \specialrule{.2em}{.1em}{.1em}
\end{tabular}
\caption{Graph link prediction averaged over 5 random initializations for the Facebook dataset. Top performers are bolded.}
\label{tbl:graph}
\end{table}

\subsubsection{Discussion}
For both applications, the results agree with our initial intuition that FWUA can improve the performance of the FW variants while scaling much better than the GenFB algorithm. We observe in the covariance plots in Figure \ref{fig:sparse}, the sparsity patterns for HCGS and SCCG are much noisier than FWUA and in Table \ref{tbl:graph}, the AUC for SCCG and HCGS methods are lower than GenFB and FWUA. 

For SCCG, tuning the smoothing parameter $\mu$ yields theoretical tradeoffs between accuracy and convergence rates.  Initially, we expected this tradeoff to be smooth, where gradually decreasing $\mu$ gradually worsened the rate of improvement per iteration. In return we expected small values of $\mu$ to yield better final solutions. Instead, we observed that the algorithm will make no progress for many iterations if $\mu$ is too small, as highlighted in Figure \ref{fig:mu_evo}. We see that the delay observed increases as $\mu$ decreases. This seems to give support to the hypothesis for FWUA where the approximation quality must be closely related to the step size since it appears that SCCG can only make progress once the step sizes, following the step-size schedule $2/(k+2)$, becomes sufficiently small. Since HCGS and SCCG do not factor in step size into the smoothing schedule, we observe that empirically, the solutions returned from these methods are not as competitive as the FWUA or GenFB algorithm.

\begin{figure}[!h]
\centering
\includegraphics[width=0.4\textwidth]{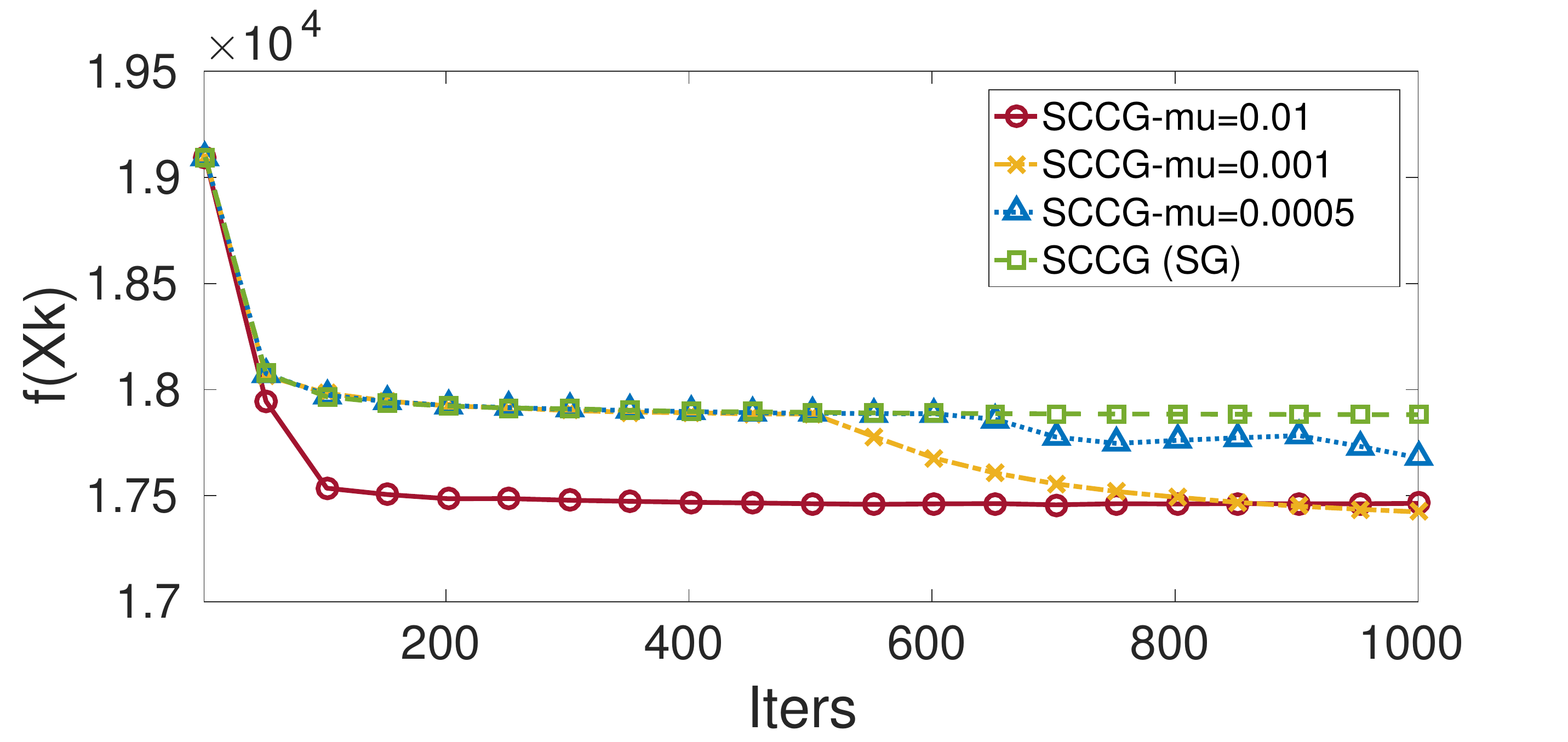}   
\caption{One trial with varying $\mu$ for sparse covariance estimation with $n = 750$. Here we see large delays as $\mu$ becomes smaller, almost making no progress for 700 iterations when $\mu = 0.0005$. }
\label{fig:mu_evo}
\end{figure}

\subsection{$\ell_{1}$ Loss Matrix Completion}
The last experiment we consider is matrix completion with an $\ell_{1}$ loss function on the MovieLens datasets\footnote{\url{https://grouplens.org/datasets/movielens/}}. Here, we consider  the objective function below
\begin{equation}
    f(X) = \norm{P_{\Omega}(X - Y)}_{1} + \lambda\norm{P_{\Omega^{c}}(X)}_{F}^{2}
\end{equation}
which is proposed in \cite{cambier2016robust} for robustness to outliers. Here the regularization penalizes  entries in the complement of $\Omega$, potentially preventing overfitting. 

We compare with the Robust Low-Rank Matrix Completion (RLRMC) algorithm proposed in \cite{cambier2016robust}, which solves a nonconvex fixed rank problem by the smoothing $\ell_{1}$ term. We additionally compare to the Greedy Low-Rank Learning (GLRL) algorithm proposed in \cite{yao2016greedy}, which greedily updates the solution with low-rank solutions found by computing the truncated SVD of a subgradient. For scalability, we utilize the Rank-Drop variant (RDFW)  proposed in \cite{cheung2017rank} for the FWUA algorithm, which empirically reduces the rank and computation time of the FW algorithm. 

\begin{table}[!h]
    \centering
    \scriptsize
    \begin{tabular}{llrrr}
    \specialrule{.2em}{.1em}{.1em}
    Dataset & & RLRMC & GLRL & FWUA\\
    \specialrule{.2em}{.1em}{.1em}
      \multirow{2}{*}{ML-100k} & RMSE & 0.892 & 0.935 & \textbf{0.876}\\ 
      & Time (s) & \textbf{10.62} & 43.56 & 65.39\\
       \hline
       \multirow{2}{*}{ML-1M} & RMSE & 0.817 & 0.917 & \textbf{0.812}\\ 
     & Time (s) & \textbf{108.97} & 1,079.11 & 862.15\\
       \hline
       \multirow{2}{*}{ML-10M} & RMSE & 0.810 & 0.901 & \textbf{0.801}\\
       & Time (s) & \textbf{2,197.20} & 26,473.89 & 6,830.31\\     
     \specialrule{.2em}{.1em}{.1em}
    \end{tabular}
    \caption[Nonsmooth matrix completion results]{Low-rank Matrix Completion averaged over 5 random initializations. Best performers are bolded.}
    \label{tbl:rmc}
    \end{table}
\subsubsection{Discussion}
We observe that FWUA performs better than both RLRMC and GLRL in terms of out of sample RMSE, but RLRMC is much faster. This is not surprising since RLRMC is a nonconvex fixed rank model. We observe that the performance of RLRMC is very sensitive to both the rank and $\lambda$ parameters, requiring extensive parameter tuning to find reasonable results. Thus, FWUA can be an attractive alternative when a good estimate of the true rank is not known a priori. We also note that for the large scale example, the number of singular values required in the truncated SVD used by the GLRL updates became very high, leading to scalability issues.

\section{Conclusion}
We propose a variant of the Frank-Wolfe algorithm for a nonsmooth objective, by replacing the linear FW subproblem with one defined by the Chebyshev uniform affine approximation. We show that for nonsmooth matrix estimation problems, this uniform approximation is easy to compute and allows for convergence analysis without assuming a bounded curvature constant. Experimentally we demonstrate that the FWUA algorithm can improve both speed and classification performance in a variety of sparse and low-rank learning tasks, while providing a viable convex alternative for $\ell_{1}$ loss matrix completion when little is known about the underlying data.
\bibliographystyle{named}
\bibliography{main}


\onecolumn
  \hsize\textwidth
  \linewidth\hsize {\centering
  {\Large\bf Supplementary Material \par}}

\subsection*{Proof of Theorem \ref{thm:sep_defn}}
\begin{proof}
Following the equioscillation Theorem \ref{thm:cheb}, there exists $\ell_{ij}$ such that,
\begin{equation}
\ell_{ij} = \argmin_{p \in \Pi_{1}} \max_{Y_{ij} \in [X^{(k)}_{ij} - \tau, X^{(k)}_{ij} + \tau]}\abs{f_{ij}(Y_{ij}) - p(Y_{ij})}.
\end{equation}
Let
\begin{equation}
b + \langle Y - X^{(k)}, \xi \rangle = \sum_{i=1}^{m}\sum_{j=1}^{n}\ell_{ij}(Y_{ij})
\end{equation}
Since $f(X)=\sum_{i=1}^{m}\sum_{j=1}^{n} f_{ij}(X_{ij})$, we have that,
\begin{align}
\label{eqn:sum_ub}
\begin{split}
&\phantom{{}{=}{}}\sum_{i=1}^{m}\sum_{j=1}^{n}\max_{Y_{ij} \in [X^{(k)}_{ij} -\tau, X^{(k)}_{ij} + \tau]} \abs{f_{ij}(Y_{ij}) - \ell_{ij}(Y_{ij})}\\
&\geq \max_{Y \in \bar{\mathcal{B}}_{\infty}(X^{(k)}, \tau)} \abs*{\sum_{i=1}^{m}\sum_{j=1}^{n}f(Y_{ij}) - \ell_{ij}(Y_{ij})}\\
&= \max_{Y \in \bar{\mathcal{B}}_{\infty}(X^{(k)}, \tau)} \abs*{f(Y) - b - \langle Y - X^{(k)}, \xi \rangle}.
\end{split}
\end{align}
By continuity of $f$, there exists some $\bar{Y}_{ij} \in [X^{(k)}_{ij} - \tau, X^{(k)}_{ij} + \tau]$ such that,
\begin{equation}
\label{eqn:ybar}
f_{ij}(\bar{Y}_{ij}) - \ell_{ij}(\bar{Y}_{ij})  = \max_{Y_{ij} \in [X^{(k)}_{ij} - \tau, X^{(k)}_{ij} + \tau]}\abs{f_{ij}(Y_{ij}) - \ell_{ij}(Y_{ij})}.
\end{equation}
where the nonnegativity of \eqref{eqn:ybar} is guaranteed by the equioscillation Theorem \ref{thm:cheb}.

Let $\bar{Y} = [\bar{Y}_{ij}]$. From $f_{ij}(\bar{Y}_{ij}) - \ell_{ij}(\bar{Y_{ij}})\geq 0$ and  $f(X)=\sum_{i=1}^{m}\sum_{j=1}^{n} f_{ij}(X_{ij})$,
\begin{align}
\label{eqn:sum_lb}
\begin{split}
&\phantom{{}{=}{}}\sum_{i=1}^{m}\sum_{j=1}^{n}\max_{Y_{ij} \in [X^{(k)}_{ij} - \tau, X^{(k)}_{ij} + \tau]} \abs{f_{ij}(Y_{ij}) - \ell_{ij}(Y_{ij})}\\
&= \sum_{i=1}^{m}\sum_{j=1}^{n} f_{ij}(\bar{Y}_{ij}) - \ell_{ij}(\bar{Y}_{ij})\\
&= f(\bar{Y}) + b + \langle \bar{Y} - X^{(k)}, \xi \rangle\\
&\leq \max_{Y \in \bar{\mathcal{B}}_{\infty}(X^{(k)}, \tau)} \abs*{f(Y) - b - \langle Y - X^{(k)}, \xi \rangle}.
\end{split}
\end{align}
Combining \eqref{eqn:sum_ub} and \eqref{eqn:sum_lb}, we have that,
\begin{align}
\begin{split}
   &\phantom{{}{=}{}}\sum_{i=1}^{m}\sum_{j=1}^{n}\max_{Y_{ij} \in [X^{(k)}_{ij} - \tau, X^{(k)}_{ij} + \tau]} \abs{f_{ij}(Y_{ij}) - \ell_{ij}(Y_{ij})}\\
   &= \max_{y \in \bar{\mathcal{B}}_{\infty}(X^{(k)}, \tau)} \abs*{f(Y) - b - \langle Y - X^{(k)}, \xi \rangle} 
\end{split}
\end{align}
\end{proof}

\subsection*{Proof of Theorem \ref{thm:unif_conv}}
Before we establish Theorem \ref{thm:unif_conv}, we require the following results.

\begin{lemma}[Convex Mean Value Theorem \cite{wegge1974mean}]
\label{lem:mvt}
If $f(X)$ is a closed proper convex function from $\mathbb{R}^{m \times n} \rightarrow \mathbb{R}$ for $X$ in a convex set $\mathcal{D} \subseteq \mathbb{R}^{m \times n}$, then $X_{0}$ and $X_{1} \in ri(\mathcal{D})$ implies that there exists $0 < t < 1$, and a subgradient $G \in \partial f(C)$, where $C = tX_{0} + (1-t)X_{1}$, such that $f(X_{1}) - f(X_{0}) = \langle X_{1} - X_{0}, G \rangle$. 
\end{lemma}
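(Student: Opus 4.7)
The plan is to reduce the matrix-valued statement to the one-dimensional mean value theorem for convex functions by restricting $f$ to the line segment joining $X_0$ and $X_1$. First, I would define $\phi \colon [0,1] \to \mathbb{R}$ by $\phi(s) = f((1-s)X_0 + sX_1)$. Since $ri(\mathcal{D})$ is convex, the whole segment lies in $ri(\mathcal{D})$, and so $\phi$ is finite-valued on $[0,1]$. It inherits convexity from $f$ as the composition of a convex function with an affine map, hence is continuous on $[0,1]$ and admits finite one-sided derivatives $\phi'_-(s)$ and $\phi'_+(s)$ at every $s \in (0,1)$.

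The key intermediate step is a one-dimensional convex mean value theorem: there exists $s \in (0,1)$ with
\begin{equation*}
\phi(1) - \phi(0) \in [\phi'_-(s), \phi'_+(s)] = \partial \phi(s).
\end{equation*}
I would argue this via monotonicity of $\phi'_+$: the identity $\int_0^1 \phi'_+(u)\,du = \phi(1) - \phi(0)$ forces the nondecreasing map $u \mapsto \phi'_+(u)$ to attain the average value $\phi(1) - \phi(0)$ either at a point of continuity (giving $\phi(1)-\phi(0) = \phi'_+(s)$) or at a jump (giving $\phi'_-(s) \leq \phi(1)-\phi(0) \leq \phi'_+(s)$). Either way the value lies in the subdifferential $\partial \phi(s)$.

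Next I would lift this scalar subgradient back to a matrix subgradient of $f$ using the subdifferential chain rule for composition with an affine map. Setting $C = (1-s)X_0 + sX_1$ and $t = 1-s$, so that $C = tX_0 + (1-t)X_1$ as required, the identity
\begin{equation*}
\partial \phi(s) \;=\; \{\, \langle X_1 - X_0,\, G\rangle : G \in \partial f(C)\,\}
\end{equation*}
produces some $G \in \partial f(C)$ with $f(X_1) - f(X_0) = \phi(1) - \phi(0) = \langle X_1 - X_0, G\rangle$, which is the claimed identity.

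The main technical obstacle is the reverse inclusion in the chain rule above. The forward inclusion $\{\langle X_1-X_0, G\rangle : G \in \partial f(C)\} \subseteq \partial \phi(s)$ is immediate from the definition of subgradient, but the reverse inclusion is a nontrivial constraint-qualification fact whose standard proof proceeds by a Hahn--Banach separation argument applied to the epigraph of $f$, and crucially relies on the hypothesis $X_0, X_1 \in ri(\mathcal{D})$ so that $C \in ri(\mathcal{D})$ and $\partial f(C)$ is nonempty with the regularity needed for the sum/composition rule (cf.\ Rockafellar's results underlying \cite{wegge1974mean}).
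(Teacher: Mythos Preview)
The paper does not supply its own proof of this lemma; it is simply quoted from \cite{wegge1974mean} and used as a black box in the subsequent arguments. Your sketch is correct and is in fact the standard route (essentially Wegge's): restrict $f$ to the segment, obtain the scalar convex mean value theorem from monotonicity of the one-sided derivative and the integral identity, then lift the scalar subgradient back via the affine chain rule for subdifferentials. Your identification of the reverse inclusion in that chain rule as the place where the relative-interior hypothesis is genuinely needed is accurate.
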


\begin{theorem}
\label{thm:aff_def}
Suppose that $f$ satisfies Assumption \ref{asm:sep} and each $f_{ij}$ is closed, proper, and convex. Then the best affine approximation as defined in Definition \ref{def:ue} to each $f_{ij}$ on the interval $[a,b]$ is given by,
\begin{equation*}
\ell_{ij}(x) = \frac{f_{ij}(c) + h_{ij}^{+}(c)}{2} + \frac{f_{i}(b) - f_{ij}(a)}{b - a}(x - c)
\end{equation*}
where
\begin{equation*}
h^{+}_{ij}(x) \coloneqq f_{ij}(a) + \frac{f_{ij}(b) - f_{ij}(a)}{b - a}(x - a)
\end{equation*}
and $c \in (a,b)$ is chosen to satisfy Lemma \ref{lem:mvt}, the convex mean value theorem \cite{wegge1974mean} for $f_{ij}$ on $[a,b]$.
\end{theorem}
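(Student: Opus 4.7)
My plan is to apply the Chebyshev equioscillation theorem (Theorem~\ref{thm:cheb}) directly: since $\ell_{ij}$ is affine (i.e.\ a polynomial of degree $\le 1$), it is the minimax approximation to $f_{ij}$ on $[a,b]$ iff the error $e(x) \coloneqq f_{ij}(x) - \ell_{ij}(x)$ attains its extremal value, with alternating signs, at (at least) three points of $[a,b]$. The candidates for these three points will be $a$, $c$, and $b$, and I will verify $e(a) = e(b) = -e(c)$, together with the fact that these are genuine global extrema of $e$ on $[a,b]$.

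The first step is to identify the slope of $\ell_{ij}$. By inspection the stated formula has slope $s \coloneqq (f_{ij}(b)-f_{ij}(a))/(b-a)$, i.e.\ the secant slope. By Lemma~\ref{lem:mvt} applied to the univariate closed proper convex $f_{ij}$ on $[a,b]$, there exists $c\in(a,b)$ and $G\in\partial f_{ij}(c)$ with $f_{ij}(b)-f_{ij}(a) = G(b-a)$, so $s\in \partial f_{ij}(c)$. Consequently $e$ is convex (as $\ell_{ij}$ is affine) and $0\in\partial e(c)$, which makes $c$ a global minimizer of $e$ on $[a,b]$. By convexity, the global maximum of $e$ on $[a,b]$ is attained at one of the endpoints.

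Next I would compute the three error values. Writing $h_{ij}^{+}(c) = f_{ij}(a) + s(c-a)$ and using $\ell_{ij}(x) = \tfrac12(f_{ij}(c)+h_{ij}^{+}(c)) + s(x-c)$, direct substitution gives
\begin{align*}
e(c) &= f_{ij}(c) - \tfrac12\bigl(f_{ij}(c)+h_{ij}^{+}(c)\bigr) = \tfrac12\bigl(f_{ij}(c)-h_{ij}^{+}(c)\bigr),\\
e(a) &= f_{ij}(a) - \tfrac12\bigl(f_{ij}(c)+h_{ij}^{+}(c)\bigr) - s(a-c) = \tfrac12\bigl(h_{ij}^{+}(c)-f_{ij}(c)\bigr),
\end{align*}
and an identical calculation at $b$ (using $f_{ij}(a)+s(b-a)=f_{ij}(b)$) yields $e(b)=e(a)$. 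Since $h^{+}_{ij}$ is the secant above the convex $f_{ij}$, we have $h_{ij}^{+}(c)\ge f_{ij}(c)$, so $e(a)=e(b)\ge 0\ge e(c)$ and $e(a)=-e(c)$.

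Combining the two observations, $e$ attains its maximum $E\coloneqq e(a)=e(b)$ on $[a,b]$ exactly at the endpoints (by convexity) and its minimum $-E$ at $c$, so the error equioscillates at the three points $a<c<b$ with alternating signs $+,-,+$. By Chebyshev's equioscillation theorem this characterizes $\ell_{ij}$ as the best affine approximation to $f_{ij}$ on $[a,b]$, which is the stated claim. The only subtle point is the use of Lemma~\ref{lem:mvt} to produce a $c$ at which $s$ is actually a subgradient of $f_{ij}$; without this, $c$ would not be guaranteed to minimize the convex error $e$, and the equioscillation at three points could fail.
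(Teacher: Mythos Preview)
Your proposal is correct and follows essentially the same route as the paper: use Lemma~\ref{lem:mvt} to produce $c\in(a,b)$ with the secant slope $s\in\partial f_{ij}(c)$, verify the equioscillation $e(a)=e(b)=-e(c)$, and invoke Theorem~\ref{thm:cheb}. The only cosmetic difference is that the paper frames the extremality via the sandwich $h^{-}_{ij}\le f_{ij}\le h^{+}_{ij}$ (tangent below, secant above, $\ell_{ij}$ their midline), whereas you argue directly from convexity of the error $e$; the content is the same.
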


\begin{proof}
The affine function $h^{+}_{ij}(x)$ defines the line that connects $(a,f_{ij}(a))$ to $(b, f_{ij}(b))$. Since $f_{ij}$ is convex, $f_{ij} \leq h^{+}_{ij}$ on $[a,b]$.

From Lemma \ref{lem:mvt},  there exists $c \in (a,b)$ such that
\[
\frac{f_{ij}(b)-f_{ij}(a)}{b-a} \in \partial f_{ij}(c)
\]
Define
\[
h^{-}_{ij}(x) = f_{ij}(c) + \frac{f_{ij}(b)-f_{ij}(a)}{b-a}(x-c)
\]
The function $h^{-}_{ij}(x)$ is the line tangent to $f_{ij}(x_{ij})$ at $x_{ij}=c$ and is parallel to $h^{+}_{ij}$. Since $f_{ij}$ is convex, $f_{ij} \geq h^{-}_{ij}$ on $[a,b]$.

By construction, $\ell_{ij}$ is a line parallel and equidistant to the lines $h^{+}_{ij}$ and $h^{-}_{ij}$. Thus, it is easy to verify that
\begin{equation}
f_{ij}(a) - \ell_{ij}(a) = -(f_{ij}(c) - \ell_{ij}(c)) = f_{ij}(b) - \ell_{ij}(b)
\end{equation}
satisfying the equioscillation property. Thus, $\ell_{ij}$ is the minimax affine approximation to $f_{ij}$ on $[a,b]$.
\end{proof}

\begin{lemma}
\label{lem:m_bound}
Let $f$ be a function that satisfies Assumption \ref{asm:sep} with convex $f_{ij}$ and let $m_{ij}(X_{ij},\tau)$ be the corresponding uniform slope function for $f_{ij}$. Then 
\begin{equation*}
m_{ij}(X_{ij}, \tau) = g_{ij}(c) 
\end{equation*}
for some $c \in (X_{ij} - \tau, X_{ij} + \tau)$ where $g_{ij}$ is a subgradient of $f_{ij}$.
\end{lemma}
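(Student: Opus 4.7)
The plan is to reduce the statement directly to the Convex Mean Value Theorem (Lemma \ref{lem:mvt}) applied to $f_{ij}$ on the one-dimensional interval $[X_{ij}-\tau, X_{ij}+\tau]$. The first step is to identify what the uniform slope function $m_{ij}(X_{ij},\tau)$ actually is: by Theorem \ref{thm:aff_def}, the best affine approximation $\ell_{ij}$ to $f_{ij}$ on $[a,b]=[X_{ij}-\tau,X_{ij}+\tau]$ has slope $(f_{ij}(b)-f_{ij}(a))/(b-a)$, so
\begin{equation*}
m_{ij}(X_{ij},\tau) = \frac{f_{ij}(X_{ij}+\tau)-f_{ij}(X_{ij}-\tau)}{2\tau}.
\end{equation*}
This is the secant slope of $f_{ij}$ across the interval.

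Next, I would invoke Lemma \ref{lem:mvt} with $X_{0}=X_{ij}-\tau$ and $X_{1}=X_{ij}+\tau$ in the scalar case. Since $f_{ij}$ is closed, proper, and convex on an open set containing $[X_{ij}-\tau,X_{ij}+\tau]$, the hypotheses of the Convex MVT are satisfied, so there exists $t\in(0,1)$ and a subgradient $G\in\partial f_{ij}(c)$ at $c = t(X_{ij}-\tau)+(1-t)(X_{ij}+\tau) \in (X_{ij}-\tau,X_{ij}+\tau)$ such that
\begin{equation*}
f_{ij}(X_{ij}+\tau)-f_{ij}(X_{ij}-\tau) = \langle (X_{ij}+\tau)-(X_{ij}-\tau),\, G\rangle = 2\tau\, G.
\end{equation*}
Dividing by $2\tau$ and comparing with the expression above for $m_{ij}(X_{ij},\tau)$ yields $m_{ij}(X_{ij},\tau)=G$, where $G$ is by construction a subgradient $g_{ij}(c)\in\partial f_{ij}(c)$, completing the proof.

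Essentially no obstacle should arise; the only subtlety is confirming that the form of $m_{ij}$ used in the lemma statement matches the secant-slope identified via Theorem \ref{thm:aff_def}, i.e.\ ensuring that ``uniform slope function'' is defined as the slope of the equioscillating minimax affine approximation rather than some other quantity. Once that identification is made, the proof is a one-line invocation of the Convex Mean Value Theorem, and the openness of the interval $(X_{ij}-\tau,X_{ij}+\tau)$ for $c$ follows from the strict inequality $0<t<1$ in Lemma \ref{lem:mvt}.
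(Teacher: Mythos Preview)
Your proposal is correct and follows essentially the same approach as the paper: first use Theorem~\ref{thm:aff_def} to identify $m_{ij}(X_{ij},\tau)$ as the secant slope $\frac{f_{ij}(X_{ij}+\tau)-f_{ij}(X_{ij}-\tau)}{2\tau}$, then apply the Convex Mean Value Theorem (Lemma~\ref{lem:mvt}) on $[X_{ij}-\tau,X_{ij}+\tau]$ to obtain the interior point $c$ and subgradient. The paper's proof is simply a terser statement of exactly these two steps.
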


\begin{proof}
From Theorem \ref{thm:aff_def}, the slope function has the form,
\begin{equation}
    m_{ij}(X_{ij}, \tau) = \frac{f_{ij}(X_{ij} + \tau) - f_{ij}(X_{ij} - \tau)}{2\tau}
\end{equation}
which is simply the slope of the secant line of $f_{ij}$ from $X_{ij} - \tau$ to $X_{ij} + \tau$. Thus, the desired result follows immediately from the convex mean value theorem.
\end{proof}

\begin{lemma}
\label{lem:cvg_to_grad}
Let $f$ be a function that satisfies Assumption \ref{asm:sep} and \ref{asm:comp}, $m_{ij}(x, \tau)$ be the corresponding uniform slope function for $f_{ij}$, and  $a\leq b$. We have that,
\begin{equation*}
\int_{a}^{b}\abs{m_{ij}(x, \tau) - f_{ij}'(x)}dx \leq (b-a)\max_{\substack{y,z \in (a - \tau, b + \tau)\\g \in \partial f_{ij}(y)\\ h \in \partial f_{ij}(z)}}\abs{g - h}
\end{equation*}
where $f'_{ij}(x) \in \partial f_{ij}(x)$ can be any subgradient of $f_{ij}$ at $x$.

If $f_{ij}$ is additionally twice differentiable on $(a-\tau,b+\tau)$, then,
\begin{equation*}
\int_{a}^{b} \abs{m_{ij}(x, \tau)  - f_{ij}'(x)}dx \leq (b-a)\tau\max_{y,z \in (a,b)}\abs*{\frac{f''(y) - f''(z)}{2}}.
\end{equation*}

\end{lemma}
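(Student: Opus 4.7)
The proof splits naturally into the two stated bounds, both of which reduce to a pointwise estimate on the integrand followed by a trivial integration over $[a,b]$, so the factor $(b-a)$ will be cosmetic in both cases.

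\textbf{First bound (subgradient version).} The plan is to apply Lemma \ref{lem:m_bound} pointwise. For each $x \in [a,b]$ it gives $m_{ij}(x,\tau) = g_{ij}(c(x))$ for some $c(x) \in (x-\tau, x+\tau) \subseteq (a-\tau, b+\tau)$ and some subgradient $g_{ij}(c(x)) \in \partial f_{ij}(c(x))$. Since $x$ itself also lies in $(a-\tau, b+\tau)$ and $f'_{ij}(x) \in \partial f_{ij}(x)$, the pointwise difference $|m_{ij}(x,\tau) - f'_{ij}(x)|$ is the modulus of a difference of two subgradients evaluated at two points of $(a-\tau, b+\tau)$. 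It is therefore bounded by the supremum appearing on the right-hand side, uniformly in $x$. Integrating this constant bound over $[a,b]$ pulls out the factor $(b-a)$ and yields the first inequality.

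\textbf{Second bound (twice-differentiable version).} Here the plan is to expand $f_{ij}(x\pm\tau)$ around $x$ using Taylor's theorem with integral remainder, valid because $f_{ij}''$ exists on $(a-\tau, b+\tau)$. Subtracting the two expansions makes the $f_{ij}(x)$ terms cancel and doubles the linear term $\tau f'_{ij}(x)$, so dividing by $2\tau$ gives a clean identity of the form
\begin{equation*}
m_{ij}(x,\tau) - f'_{ij}(x) = \frac{1}{2\tau}\int_{0}^{\tau}(\tau - s)\bigl[f_{ij}''(x+s) - f_{ij}''(x-s)\bigr]\,ds.
\end{equation*}
Both $x+s$ and $x-s$ lie in $(a-\tau, b+\tau)$, so $|f_{ij}''(x+s) - f_{ij}''(x-s)|$ is at most twice the maximum appearing on the right-hand side. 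Evaluating $\int_0^\tau(\tau-s)\,ds = \tau^2/2$ produces a pointwise bound of order $\tau$ times that maximum, and integrating over $[a,b]$ yields the stated inequality (with some room to spare in the constant).

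\textbf{Main obstacle.} The only nontrivial step is deriving the integral identity for $m_{ij}(x,\tau) - f'_{ij}(x)$ cleanly; in particular, one must be careful when writing the Taylor remainder for $f_{ij}(x-\tau)$ so that the second derivative is evaluated at $x-s$ (with $s \in [0,\tau]$) rather than at a negative offset, so that the two remainder integrals can be combined into a single difference $f_{ij}''(x+s) - f_{ij}''(x-s)$. Once this identity is set up, the remaining estimates are immediate.
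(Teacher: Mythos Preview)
Your proposal is correct and follows the same two-part structure as the paper: invoke Lemma~\ref{lem:m_bound} pointwise for the first bound, and Taylor-expand $f_{ij}(x\pm\tau)$ about $x$ for the second. Two minor technical differences are worth noting. First, the paper opens by applying the mean value theorem for integrals to collapse $\int_a^b|\,\cdot\,|\,dx$ to $(b-a)\,|m_{ij}(c,\tau)-f'_{ij}(c)|$ at a single $c\in(a,b)$, whereas you bound the integrand pointwise and then integrate; your route is arguably more robust, since the integral MVT formally requires continuity of the integrand, which is not guaranteed when $f'_{ij}$ is an arbitrary subgradient selection. Second, for the twice-differentiable case the paper uses the \emph{Lagrange} form of the remainder, obtaining directly
\[
m_{ij}(c,\tau)-f'_{ij}(c)=\tau\,\frac{f''_{ij}(d_1)-f''_{ij}(d_2)}{2}
\]
for some $d_1\in(c,c+\tau)$, $d_2\in(c-\tau,c)$; this sidesteps what you flag as the ``main obstacle'' and lands on the stated constant exactly, whereas your integral-remainder identity yields a constant that is a factor of $2$ sharper but formally needs $f''_{ij}$ to be integrable rather than merely to exist.
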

\begin{proof}
Using intermediate value theorem for integrals, there exists $c \in (a,b)$ such that
\begin{equation*}
\int_{a}^{b} \abs{m_{ij}(x, \tau)  - f_{ij}'(x)}dx = (b-a)\abs{m_{ij}(c, \tau) - f_{ij}'(c)}.
\end{equation*}

From Theorem \ref{thm:aff_def}
\begin{equation}
\label{eqn:central_diff}
m_{ij}(c, \tau) = \frac{f_{ij}(c + \tau) - f_{ij}(c - \tau)}{2\tau}.
\end{equation}

From Lemma \ref{lem:m_bound}, we have that $m_{ij}(x, \tau) = g_{ij}(c)$ for some $c \in (x - \tau, x + \tau)$. Since $m_{ij}(x, \tau)$ and $f_{ij}'(x)$ are just specific subgradients on the evaluated on the interval $(x - \tau, x + \tau)$, we have
\begin{equation*}
\abs{m_{ij}(x, \tau) - f_{ij}'(x)} \leq\max_{\substack{y,z \in (a - \tau, b + \tau)\\g \in \partial f_{ij}(y)\\ h \in \partial f_{ij}(z)}}\abs{g - h}.
\end{equation*}

Following the Lagrange Remainder Theorem, if $f_{ij}$ is twice differentiable in $(c - \tau, c+ \tau)$, we have
\begin{align}
\label{eqn:lag_rem}
\begin{split}
f_{ij}(c + \tau) &= f_{ij}(c) + f_{ij}'(c)\tau + \frac{1}{2}f_{ij}''(d_{1})\tau^{2}\\
f_{ij}(c - \tau) &= f_{ij}(c) - f_{ij}'(c)\tau  + \frac{1}{2}f_{ij}''(d_{2})\tau^{2}
\end{split}
\end{align}
for some $d_{1} \in (c, c + \tau)$ and $d_{2} \in (c - \tau, c)$. 
Substituting \eqref{eqn:lag_rem} into \eqref{eqn:central_diff},
\begin{align*}
\begin{split}
\frac{f_{ij}(c + \tau) - f_{ij}(c - \tau)}{2\tau} &= \frac{1}{2}\left(\frac{f_{ij}(c + \tau) - f_{ij}(c)}{\tau} - \frac{f_{ij}(c - \tau) - f_{ij}(c)}{\tau}\right)\\
&= \frac{1}{2}\left(f_{ij}'(c) +\frac{1}{2}f_{ij}''(d_{1})\tau + f_{ij}'(c) - \frac{1}{2}f_{ij}''(d_{2})\tau\right)\\
&= f_{ij}'(c) + \frac{f_{ij}''(d_{1}) - f_{ij}''(d_{2})}{2}\tau
\end{split}
\end{align*}
This implies,
\begin{align*}
\int_{a}^{b}\abs{m_{ij}(x, \tau) - f_{ij}'(x)}dx &= (b-a)\abs{m_{ij}(c, \tau) - f_{ij}'(c)}\\
&\leq (b-a)\tau\abs*{\frac{f_{ij}''(d_{1}) - f_{ij}''(d_{2})}{2}}\\
&\leq (b-a)\tau\max_{y,z \in (a,b)}\abs*{\frac{f''(y) - f''(z)}{2}}
\end{align*}
and the result follows.
\end{proof}

\begin{proof} (Theorem \ref{thm:unif_conv})
\begin{enumerate}
    \item [(a)] For notational simplicity, we drop the dependency on $\tau$ for $\tilde{f}_{ij}$ and $m_{ij}$.

    We establish that $\tilde{f}$ is convex by showing that each $\tilde{f}_{ij}$ is convex. Since $\tilde{f}_{ij}$ is a differentiable function of one variable, $\tilde{f}_{ij}$ is convex if and only if  $\tilde{f}_{ij}'$ is nondecreasing in $X_{ij}$. We have that for any $h > 0$,
    \begin{align*}
    \tilde{f}_{ij}'(X_{ij} + h) - \tilde{f}_{ij}'(X_{ij}) &= m_{ij}(X_{ij} + h) - m_{ij}(X_{ij})\\
    &= \frac{f_{ij}(X_{ij} + h + \tau) - f_{ij}(X_{ij} + h - \tau)}{2\tau} \\
    &\phantom{{}{=}{}}- \frac{f_{ij}(X_{ij} + \tau) - f_{ij}(X_{ij} - \tau)}{2\tau}.
    \end{align*}
    
    Since $f_{ij}$ is convex, we have that the slope of any secant,
    \begin{equation*}
    S(X_{ij},Y_{ij}) = \frac{f_{ij}(X_{ij}) - f_{ij}(Y_{ij})}{X_{ij} - Y_{ij}}
    \end{equation*}
    is nondecreasing in either $X_{ij}$ or $Y_{ij}$ \cite{gordon2001real}.
    
    Thus,  $\tilde{f}_{ij}'$ is nondecreasing follows immediately since,
    \begin{align*}
    \frac{f_{ij}(X_{ij} + \tau) - f_{ij}(X_{ij} - \tau)}{2\tau} &\leq \frac{f_{ij}(X_{ij} + h + \tau) - f_{ij}(X_{ij} - \tau)}{2\tau + h} 
    \\&\leq \frac{f_{ij}(X_{ij} + h + \tau) - f_{ij}(X_{ij} + h - \tau)}{2\tau}.
    \end{align*}
    
    \item [(b)] From the definition of $\tilde{f}$,
    \begin{equation}
    \frac{\partial}{\partial X_{ij}} \tilde{f}(X, \tau) = m_{ij}(X_{ij}, \tau).
    \end{equation}
    To verify the Lipschitz condition, note that from Theorem \ref{thm:aff_def}, 
    \begin{equation}
    m_{ij}(X_{ij},\tau) = \frac{f_{ij}(X_{ij} + \tau) - f_{ij}(X_{ij} - \tau)}{2\tau}.
    \end{equation}
    For any $y, z \in \mathbb{R}$ and  $\tau > 0$, we have,
    \begin{align}
    \begin{split}
    \abs{m(z, \tau) - m(y,\tau)} &= \abs*{\frac{f_{ij}(z + \tau) - f_{ij}(z - \tau)}{2\tau} - \frac{f_{ij}(y + \tau) - f_{ij}(y - \tau)}{2\tau}} \\
    &\leq \frac{1}{2\tau}\abs{(f_{ij}(z + \tau) - f_{ij}(y + \tau))} + \abs{(f_{ij}(z - \tau) - f_{ij}(y - \tau)}\\
    &= \frac{1}{\tau}L_{ij}\abs{z - y} \text{ (since $f_{ij}$ is Lipschitz continuous)}
    \end{split}
    \end{align}
    Thus,
    \begin{align*}
    \norm*{\nabla_{X}\tilde{f}(Z, \tau) - \nabla_{X}\tilde{f}(Y,\tau)}_{\infty} &= \max_{i,j}~\abs*{m_{ij}(Z_{ij}, \tau) - m_{ij}(Y_{ij}, \tau)}\\
    &\leq \frac{L}{\tau}\norm{Z - Y}_{\infty},
    \end{align*}
    where $\norm{\cdot}_{\infty}$ is the component-wise maximum absolute value.
    
    \item [(c)] Note we can expand the maximum as follows,
    \begin{equation*}
    \max_{X \in \mathcal{D}} \abs*{\tilde{f}(X, \tau) - f(X)} = \max_{X \in \mathcal{D}} \abs*{\sum_{i=1}^{m}\sum_{j=1}^{n}\int_{X_{\min}}^{X_{ij}} m_{ij}(x, \tau)dx + f_{ij}(X_{\min}) - f_{ij}(X_{ij})}.
    \end{equation*}
    Suppose $f_{ij}$ is not differentiable only at the points $c_{1}, c_{2},..., c_{M_{i}}$. Partition the interval $[X_{\min}, X_{ij}]$ as follows. Let $\mathcal{A}_{ij}$ be a collection of intervals,
    \begin{align*}
    \mathcal{A}_{ij} \coloneqq \{[\alpha_{t}, \beta_{t}]: \alpha_{t} &= \max\{c_{t} - \tau, c_{t-1} + \tau, X_{\min}\}, \text{ and }
    \\ \beta_{t} &= \min\{c_{t} + \tau, c_{t+1} - \tau, X_{ij}\},
    \end{align*}
    with $c_{0} = X_{\min}$ and $c_{M_{i}+1} = X_{ij}$. We can interpret each interval $[\alpha_{t}, \beta_{t}]$ as a neighborhood around each point of nondifferentiability with length at most $2\tau$. Note that the intervals do not overlap except possibly at the endpoints, and do not extend past the interval $[X_{\min}, X_{ij}]$. 
    
    Let $\mathcal{B}_{ij}= \bigcup (b_{t},b_{t+1})$ be the minimal set of intervals such that  $\mathcal{B}_{ij}=[X_{\min}, X_{ij}] \setminus \mathcal{A}_{ij}$. Thus, $\mathcal{A}_{ij} \cup \mathcal{B}_{ij}$ covers the interval $[X_{\min}, X_{ij}]$ and we have that for every interval in $\mathcal{B}_{ij}$, $f_{ij}$ is differentiable, and hence twice differentiable from our assumptions.
    
    Then we can write
    $\int_{X_{\min}}^{X_{ij}}m_{ij}(x,\tau)dx$ as a sum of integrals over intervals from $\mathcal{A}_{ij}$ and $\mathcal{B}_{ij}$,
    \begin{equation*}
    \int_{X_{\min}}^{X_{ij}} m_{ij}(x, \tau)dx = \sum_{(\alpha_{t}, \beta_{t}) \in \mathcal{A}_{ij}}\int_{\alpha_{t}}^{\beta_{t}}m_{ij}(x, \tau)dx + \sum_{[b_{t}, b_{t+1}] \in \mathcal{B}_{ij}}\int_{b_{t}}^{b_{t+1}}m_{ij}(x, \tau)dx
    \end{equation*}
    
    If we let $f'_{ij}(X_{ij})$ denote an arbitrary subgradient at $X_{ij}$, then we can write,
    \begin{equation*}
    f_{ij}(X_{\min}) - f_{ij}(X_{ij}) = -\int_{X_{\min}}^{X_{ij}}f_{ij}'(x)dx
    \end{equation*}
    since $f_{ij}$ is differentiable everywhere on $[X_{\min}, X_{ij}]$ except on at most a finite set.
    
    Thus,
    \begin{align*}
    &\phantom{{}={}}\max_{X \in \mathcal{D}} \abs*{\tilde{f}(X, \tau) - f(X)}\\
    &\leq \max_{X \in \mathcal{D}} \abs*{\sum_{i=1}^{m}\sum_{j=1}^{n}\int_{X_{\min}}^{X_{ij}} m_{ij}(x, \tau)dx + f_{ij}(X_{\min}) - f_{ij}(X_{ij})}\\
    &\leq \max_{X \in \mathcal{D}} \abs*{\sum_{i=1}^{m}\sum_{j=1}^{n}\int_{X_{\min}}^{X_{ij}} m_{ij}(x, \tau)dx - \int_{X_{\min}}^{X_{ij}}f_{ij}'(x)dx}\\
    &\leq \max_{X \in \mathcal{D}}\sum_{i=1}^{m}\sum_{j=1}^{n}\Bigg(\sum_{[\alpha_{t}, \beta_{t}] \in \mathcal{A}_{ij}}\int_{\alpha_{t}}^{\beta_{t}}\abs*{m_{ij}(x, \tau) - f'(x)}dx \\
    &\phantom{{}{\max}{}} + \sum_{(b_{t}, b_{t+1}) \in \mathcal{B}_{ij}}\int_{b_{t}}^{b_{t+1}}\abs*{m_{ij}(x, \tau) - f'(x)}dx \Bigg)\\  
    \end{align*}
    Let
    $\abs{\mathcal{A}_{ij}} $ and  $\abs{\mathcal{B}_{ij}}$  denote the number of subintervals for  ${\mathcal{A}_{ij}} $ and ${\mathcal{B}_{ij}}$ respectively. Then,    
    \begin{align*}
    \max_{X \in \mathcal{D}} \abs*{\tilde{f}(X, \tau) - f(X)}&\leq \max_{X \in \mathcal{D}}\sum_{i=1}^{m}\sum_{j=1}^{n}\left(\abs{\mathcal{A}_{ij}}\Delta_{f}\tau + \abs{\mathcal{B}_{ij}}D\Delta_{f}\tau \right)\\
    &\leq mn(M + 1)(1+D)\Delta_{f}\tau
    \end{align*}
    where in the second last line, we use Lemma \ref{lem:cvg_to_grad}, and the last line we have that $\abs{\mathcal{A}_{ij}}, \abs{\mathcal{B}_{ij}} \leq M + 1$ since $M$ is the maximum number of points of nondifferentiability for all $f_{ij}$.     
    \end{enumerate}
\end{proof}

\subsection*{Proof of Theorem \ref{thm:final_conv}}
For this proof, we utilize the standard Frank-Wolfe convergence theorem as seen in \cite{thesisMJ-supp}.
\begin{theorem}[\cite{thesisMJ-supp}]
\label{thm:fw_conv}
Let $x^{(k)}$ be the $k\textsuperscript{th}$ iterate given by the standard Frank-Wolfe algorithm, and let $x^{*} \in \argmin_{z \in \mathcal{D}} f(z)$. Then,
\begin{equation*}
    f(x^{(k)}) - f(x^{*}) \leq \frac{4C_{f}}{k + 2}.
\end{equation*}
\end{theorem}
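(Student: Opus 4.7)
The plan is to follow the classical Frank--Wolfe convergence analysis and proceed in three steps: (i) extract a quadratic upper bound from the definition of the curvature constant $C_f$, (ii) combine this with the linear-minimization optimality to obtain a one-step recursion on the primal gap, and (iii) close the recursion by induction on $k$ using the default diminishing step size $\gamma_k = 2/(k+2)$.

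First, I would recall that by definition
\begin{equation*}
C_f = \sup_{\substack{x, s \in \mathcal{D},\\ \gamma \in (0,1],\\ y = x + \gamma(s-x)}} \frac{2}{\gamma^2}\bigl[f(y) - f(x) - \langle y - x, \nabla f(x)\rangle\bigr],
\end{equation*}
so that $f((1-\gamma)x + \gamma s) \leq f(x) + \gamma \langle s - x, \nabla f(x)\rangle + \tfrac{\gamma^2}{2} C_f$ for every $x, s \in \mathcal{D}$ and $\gamma \in [0,1]$. Applying this to the FW update $x^{(k+1)} = (1-\gamma_k) x^{(k)} + \gamma_k s^{(k)}$, with $s^{(k)} \in \argmin_{s \in \mathcal{D}} \langle \nabla f(x^{(k)}), s\rangle$, and using optimality of $s^{(k)}$ together with convexity of $f$ to bound $\langle s^{(k)} - x^{(k)}, \nabla f(x^{(k)})\rangle \leq \langle x^{*} - x^{(k)}, \nabla f(x^{(k)})\rangle \leq f(x^{*}) - f(x^{(k)})$, yields the standard primal-gap recursion
\begin{equation*}
h_{k+1} \leq (1 - \gamma_k)\, h_k + \frac{\gamma_k^2}{2} C_f, \qquad h_k := f(x^{(k)}) - f(x^{*}).
\end{equation*}

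Second, I would prove $h_k \leq 4 C_f/(k+2)$ by induction on $k$. The base case $k = 1$ is immediate: taking $\gamma_0 = 1$ in the recursion gives $h_1 \leq C_f/2 \leq 4 C_f/3$ regardless of $h_0$. For the inductive step, plugging $\gamma_k = 2/(k+2)$ and the induction hypothesis into the recursion produces
\begin{equation*}
h_{k+1} \leq \frac{4 C_f}{k+2} \cdot \frac{k}{k+2} + \frac{2 C_f}{(k+2)^2} = \frac{2 C_f (2k+1)}{(k+2)^2},
\end{equation*}
and the desired bound $h_{k+1} \leq 4 C_f/(k+3)$ reduces to the elementary inequality $(2k+1)(k+3) \leq 2(k+2)^2$, i.e.\ $2k^2 + 7k + 3 \leq 2k^2 + 8k + 8$, which holds for every $k \geq 0$.

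The main obstacle is really bookkeeping rather than a deep difficulty: one must be careful that the convention for $C_f$ matches the one used in \cite{thesisMJ-supp} (a factor of two in the definition shifts the final constant), and one must start the induction at $k = 1$ rather than $k = 0$ since $h_0$ is not directly controlled by $C_f$. Once the recursion is in place, the remainder is routine algebraic verification of the induction step.
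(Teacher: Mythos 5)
Your proof is correct and is exactly the standard curvature-constant argument from the cited reference; the paper itself offers no proof of this theorem, stating it only as an imported result from \cite{thesisMJ-supp}. Your closing caveat about the normalization of $C_f$ is the right one to flag: with the factor-of-two convention you wrote, the recursion actually yields the sharper bound $2C_f/(k+2)$, while the stated constant $4C_f/(k+2)$ corresponds to the thesis' definition of $C_f$ without that factor --- either way the claimed inequality holds, so there is no gap.
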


\begin{proof}(Theorem \ref{thm:final_conv}).
Let $X^{*}_{k} \in \argmin_{X \in \mathcal{D}} \tilde{f}^{(k)}(X)$ and for any two functions $f$ and $g$ that are defined on $\mathcal{D}$, let $\norm{f - g}_{\infty} = \max_{X \in \mathcal{D}}\abs{f(X) - g(X)}$.

We have,
\begin{align*}
    f(X^{(k)}) - f(X^{*}) &\leq \tilde{f}^{(k)}(X^{(k)}) - f(X^{*}) \\
    &\leq \tilde{f}^{(k)}(X^{(k)}) - \tilde{f}^{(k)}(X^{*}) + \norm{\tilde{f}^{(k)} - f}_{\infty}\\
    &\leq \tilde{f}^{(k)}(X^{(k)}) - \tilde{f}^{(k)}(X_{k}^{*}) + \norm{\tilde{f}^{(k)} - f}_{\infty}
\end{align*}
Since $\tilde{f}^{(k)} = \tilde{f}^{(k')}$ for all $k \geq k'$, the bound becomes,
\begin{equation*}
    f(X^{(k)})  - f(X^{*}) \leq \tilde{f}^{(k')}(X^{(k)}) - \tilde{f}^{(k')}(X_{k}^{*}) + \norm{\tilde{f}^{(k')} - f}_{\infty}
\end{equation*}
From Theorem \ref{thm:unif_conv} implies that,
\begin{equation*}
  \norm{\tilde{f}^{(k)} - f}_{\infty} \leq  \frac{4mn(M+1)D^{2}\Delta_{f}}{k+2} \leq \frac{\epsilon}{2} \text{ when } k > \frac{8mn(M+1)D^{2}\Delta_{f}}{\epsilon} - 1
\end{equation*}
Thus, when $k \geq k'$,
\begin{equation}
\label{eqn:approx_error_k}
    \norm{\tilde{f}^{(k)} - f}_{\infty} = \norm{\tilde{f}^{(k')} - f}_{\infty} \leq \frac{\epsilon}{2}.
\end{equation}

From Theorem \ref{thm:fw_conv},  
\begin{equation*}
    \tilde{f}^{(k)}(X^{(k)}) - \tilde{f}^{(k)}(X_{k}^{*}) \leq \frac{4C_{\tilde{f}^{(k)}}}{k + 2} = \frac{4C_{\tilde{f}^{(k')}}}{k + 2}~ \forall k \geq k'.
\end{equation*}
Note that,
\begin{equation}
\label{eqn:fw_error_k}
    \frac{4C_{\tilde{f}^{(k')}}}{k + 2} < \frac{\epsilon}{2} \text{ when } k \geq \frac{8C_{\tilde{f}^{(k')}}}{\epsilon} - 1
\end{equation}
Hence, at most  $\frac{8C_{\tilde{f}^{(k')}}}{\epsilon} - 1$ additional iterations are required, and setting $k \geq k' + \frac{8C_{\tilde{f}^{(k')}}}{\epsilon} - 1$  guarantees that $\tilde{f}^{(k)}(X^{(k)}) - \tilde{f}^{(k)}(X_{(k)}^{*}) \leq \frac{\epsilon}{2}$. Combining \eqref{eqn:approx_error_k} and \eqref{eqn:fw_error_k}, we get the desired bound.
\end{proof}

\newpage
\subsection*{Derivation of $\tilde{f}$ for $f(X) = \norm{X}_{1}$}
We have $f(X) = \norm{X}_{1} = \sum_{ij} f_{ij}(X)$, with each $f_{ij}(X_{ij}) = \abs{X_{ij}}$. It it is straightforward to verify that $a_i = - \delta$ and,
\begin{equation*}
    m_{i}(X_{ij}, \tau) = \begin{cases}
    -1, \text{ when $X_{ij} < -\tau$}\\
    \frac{X_{ij}}{\tau}, \text{ when $-\tau \leq X_{ij} \leq \tau$}\\
    1, \text{ when $X_{ij} > \tau$}.
    \end{cases}
\end{equation*}
The lower bound of each $X_{ij}$ is $-\delta$, so we can compute the integrals using the following cases. First consider when $X_{ij} < -\tau$. Then, 
\begin{equation*}
    \int_{-\delta}^{X_{ij}}m_{i}(t, \tau)dt + f_{ij}(-\delta) = -X_{ij} - \delta + \delta = -X_{ij}.
\end{equation*}
When $-\tau \leq X_{ij} \leq \tau$, we have,
\begin{align*}
    &\phantom{{}={}}\int_{-\delta}^{X_{ij}}m_{i}(t, \tau)dt + f_{ij}(-\delta) \\
    &= \int_{-\delta}^{-\tau}m_{i}(t, \tau)dt + \int_{-\tau}^{X_{ij}}m_{i}(t, \tau)dt + f_{ij}(-\delta)\\
    &= \frac{X_{ij}^{2}}{2\tau} + \frac{\tau}{2}.
\end{align*}
Finally, when $X_{ij} > \tau$
\begin{align*}
    &\phantom{{}={}}\int_{-\delta}^{X_{ij}}m_{i}(t, \tau)dt + f_{ij}(-\delta)\\
    &= \int_{-\delta}^{\tau}m_{i}(t, \tau)dt + \int_{\tau}^{X_{ij}}m_{i}(t, \tau)dt + f_{ij}(-\delta)\\
    &= X_{ij}.
\end{align*}

Thus, the expression for $\hat{f}(x,\tau)$ is,
\begin{equation*}
    \hat{f}(x, \tau) = \sum_{ij} \left(\frac{X_{ij}^{2}}{2\tau}  + \frac{\tau}{2} \right)\cdot\mathbbm{1}_{\abs{X_{ij}} <  \tau} + \abs{X_{ij}}\cdot\mathbbm{1}_{\abs{X_{ij}} \geq \tau}
\end{equation*}
where $\mathbbm{1}$ is the indicator function.

\newpage
\renewcommand{\arraystretch}{1.3}
\subsection*{Parameters used for Experiments}
\begin{table}[ht]
    \centering
    \begin{tabular}{c|c|l|c|r}
    \specialrule{.2em}{.1em}{.1em}
         Parameter & Methods & Description & $n$ & Value  \\
         \specialrule{.2em}{.1em}{.1em}
         $\lambda_{1}$  & All methods & Regularization Parameter for $\ell_{1}$ norm & all &  0.4\\
         \hline
         $\lambda_{2}$  & GenFB & Regularization Parameter for trace norm & all  &  10\\
         \hline
          \multirow{6}{*}{$\delta$} & \multirow{6}{*}{Frank-Wolfe Variants} &  \multirow{6}{*}{Trace norm constraint(avg. observed)} & 750 & 43.92\\\cline{4-5}           
         & & & 1000 & 73.17\\\cline{4-5}
         & & & 1250 & 100.43\\\cline{4-5}
         & & & 1500 & 134.06\\\cline{4-5}
         & & & 1750 & 169.40\\\cline{4-5}
         & & & 2000 & 201.46\\\cline{4-5}
         \hline
         $\mu$ & SCCG & Smoothing parameter & all &  0.01\\
         \specialrule{.2em}{.1em}{.1em}
    \end{tabular}
    \caption{Parameters used for sparse covariance estimation}
    \label{tab:cov_params}
\end{table}

\begin{table}[ht]
    \centering
    \begin{tabular}{c|c|c|l|r}
    \specialrule{.2em}{.1em}{.1em}
         Parameter & Methods & $\sigma$ & Description & Value  \\
         \specialrule{.2em}{.1em}{.1em}
         \multirow{3}{*}{$\lambda_{1}$} & \multirow{3}{*}{All methods} & 0 & \multirow{3}{*}{Regularization parameter for $\ell_{1}$} & 0.01\\
         & & 0.05 & & 0.05\\
         & & 0.01 & & 0.1\\
         \hline
         \multirow{3}{*}{$\lambda_{2}$} & \multirow{3}{*}{GenFB} & 0 & \multirow{3}{*}{Regularization parameter for trace norm} & 5\\
         & & 0.05 & & 25\\
         & & 0.01 & & 50\\
         \hline
          \multirow{3}{*}{$\delta$} & \multirow{3}{*}{Frank-Wolfe Variants} & 0 & \multirow{3}{*}{Trace norm constraint (average observed)} & 1052.88\\         
         & & 0.05 & & 590.13\\
         & & 0.01 & & 639.87\\
         \hline
         \multirow{2}{*}{$\mu$} & \multirow{2}{*}{SCCG} & 0 and 0.1 & \multirow{2}{*}{Smoothing parameter} & 0.001\\
          &  & 0.05 &  & 0.01\\
         \specialrule{.2em}{.1em}{.1em}
    \end{tabular}
    \caption{Parameters used for graph link prediction}
    \label{tab:graph_params}
\end{table}

\begin{table}[!ht]
    \centering
    \begin{tabular}{c|c|c|l|r}
    \specialrule{.2em}{.1em}{.1em}
         Dataset & Parameter &   Description & Methods & Value  \\
         \specialrule{.2em}{.1em}{.1em}
         \multirow{4}{*}{MovieLens 100k} & $\lambda$ & Parameter for $\norm{P_{\Omega^{c}}(X)}_{2}^{2}$ &  RLRMC &  0.001\\
         \cline{2-5}
          & \multirow{2}{*}{$r$} &  \multirow{2}{*}{Fixed rank} & RLRMC & 2\\
          \cline{4-5}
          & & & GLRL & 10\\
         \cline{2-5}         
          & $\delta$ &  Trace norm constraint & FWUA & 1,200\\
         \specialrule{.1em}{.1em}{.1em}
         \multirow{4}{*}{MovieLens 1M} & $\lambda$ & Parameter for $\norm{P_{\Omega^{c}}(X)}_{2}^{2}$ & RLRMC & 0.001\\
         \cline{2-5}
          & \multirow{2}{*}{$r$} &  \multirow{2}{*}{Fixed rank} & RLRMC & 5\\
          \cline{4-5}
          & & & GLRL & 25\\
         \cline{2-5}         
          & $\delta$ &  Trace norm constraint & FWUA & 6,400\\
         \specialrule{.1em}{.1em}{.1em}
         \multirow{4}{*}{MovieLens 10M} & $\lambda$ & Parameter for $\norm{P_{\Omega^{c}}(X)}_{2}^{2}$ & RLRMC & 0.005\\
         \cline{2-5}
          & \multirow{2}{*}{$r$} &  \multirow{2}{*}{Fixed rank} & RLRMC & 10\\
          \cline{4-5}
          & & & GLRL & 50\\
         \cline{2-5}         
          & $\delta$ &  Trace norm constraint & FWUA & 15,000\\
         \specialrule{.2em}{.1em}{.1em}
    \end{tabular}
    \caption{Parameters used for matrix completion}
    \label{tab:mc_params}
\end{table}

\newpage
\subsection*{Full Experimental Results for Covariance Estimation}
\begin{figure*}[!h]
\centering
    \begin{subfigure}[b]{0.3\textwidth}
        \centering
        \includegraphics[width=\textwidth]{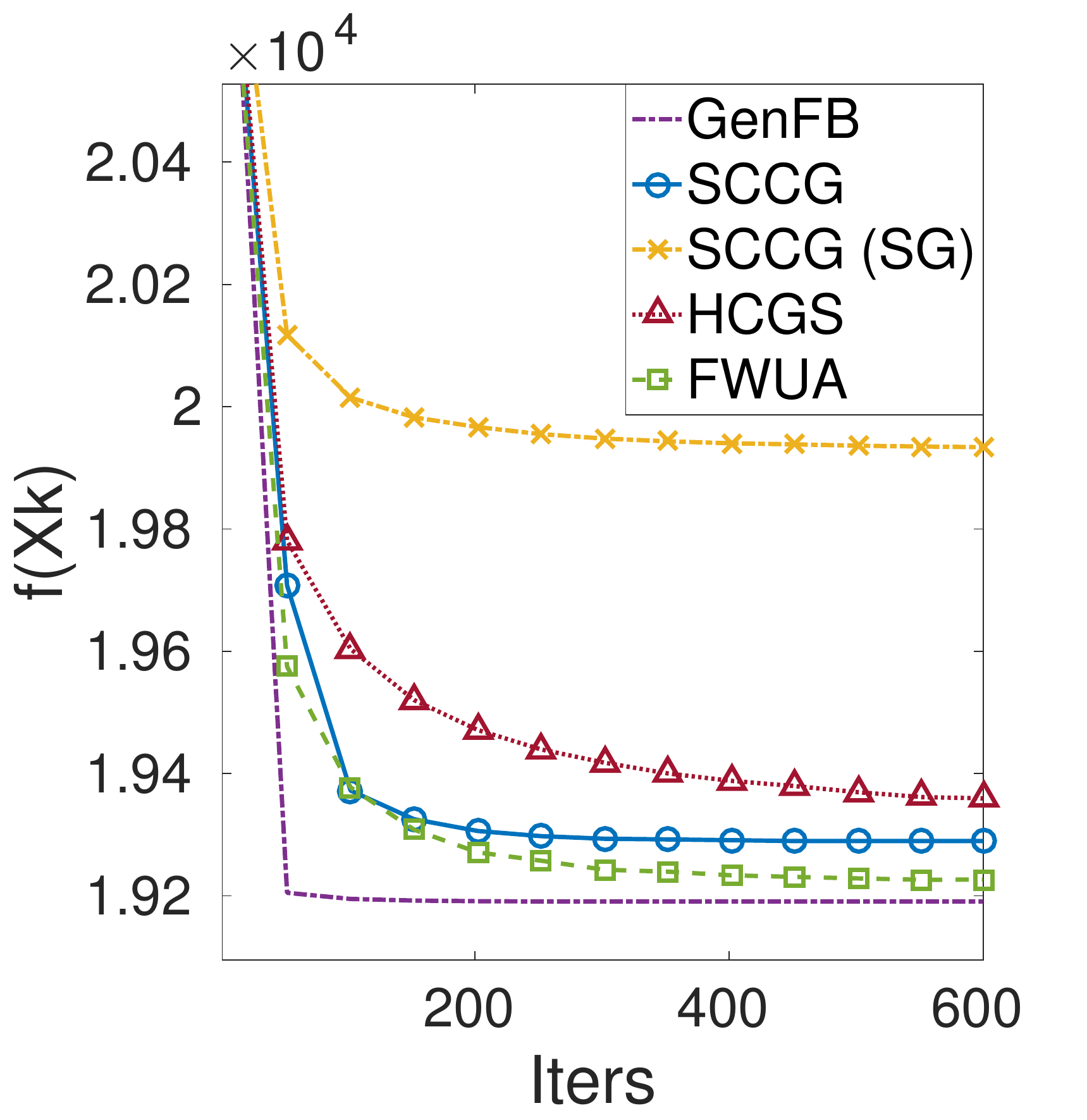}        
        \caption{$n=750$}
    \end{subfigure}%
    ~ 
        \begin{subfigure}[b]{0.3\textwidth}
        \centering
        \includegraphics[width=\textwidth]{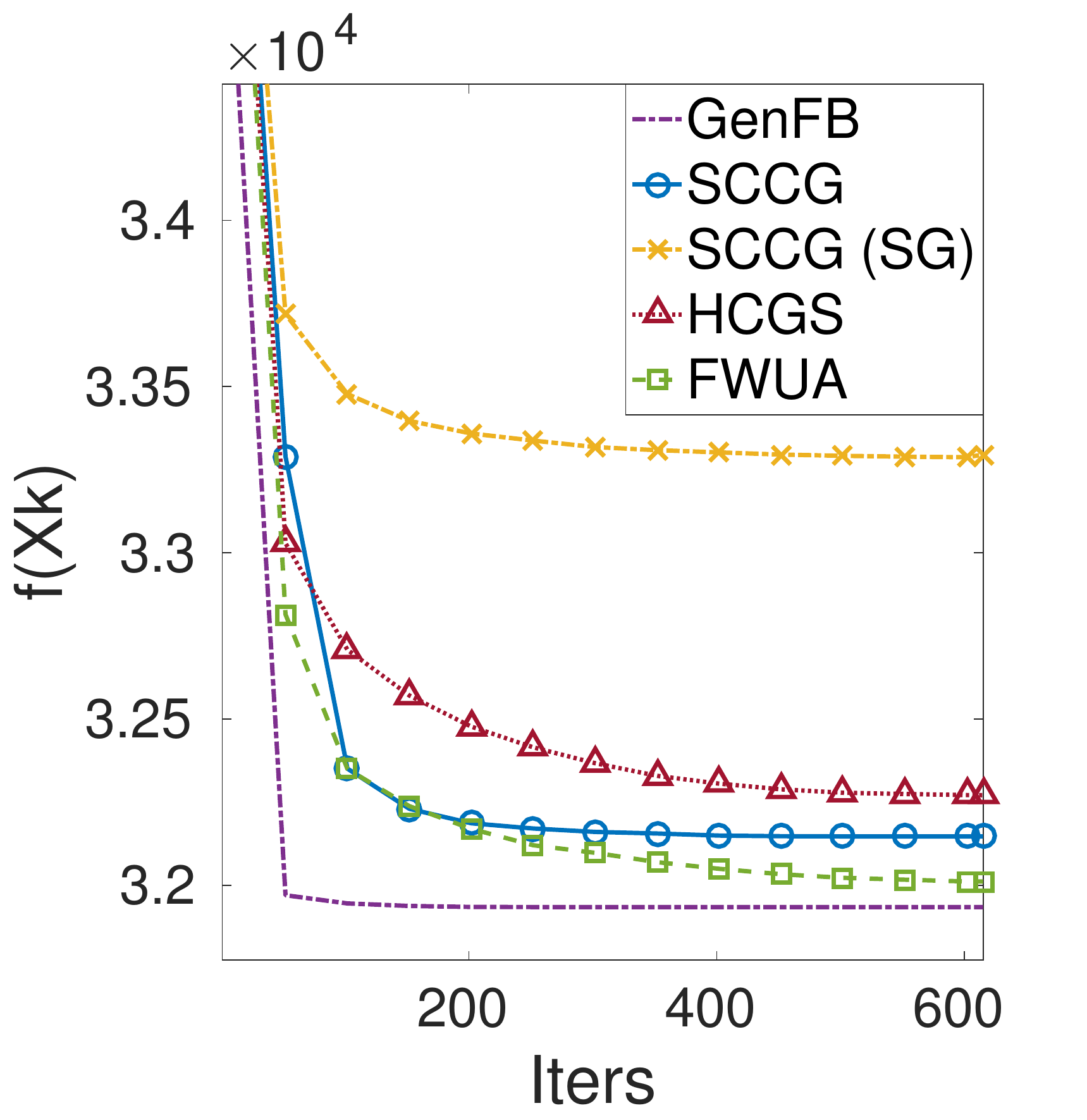}        
        \caption{$n=1000$}
    \end{subfigure}%
    ~
    \begin{subfigure}[b]{0.3\textwidth}
        \centering
        \includegraphics[width=\textwidth]{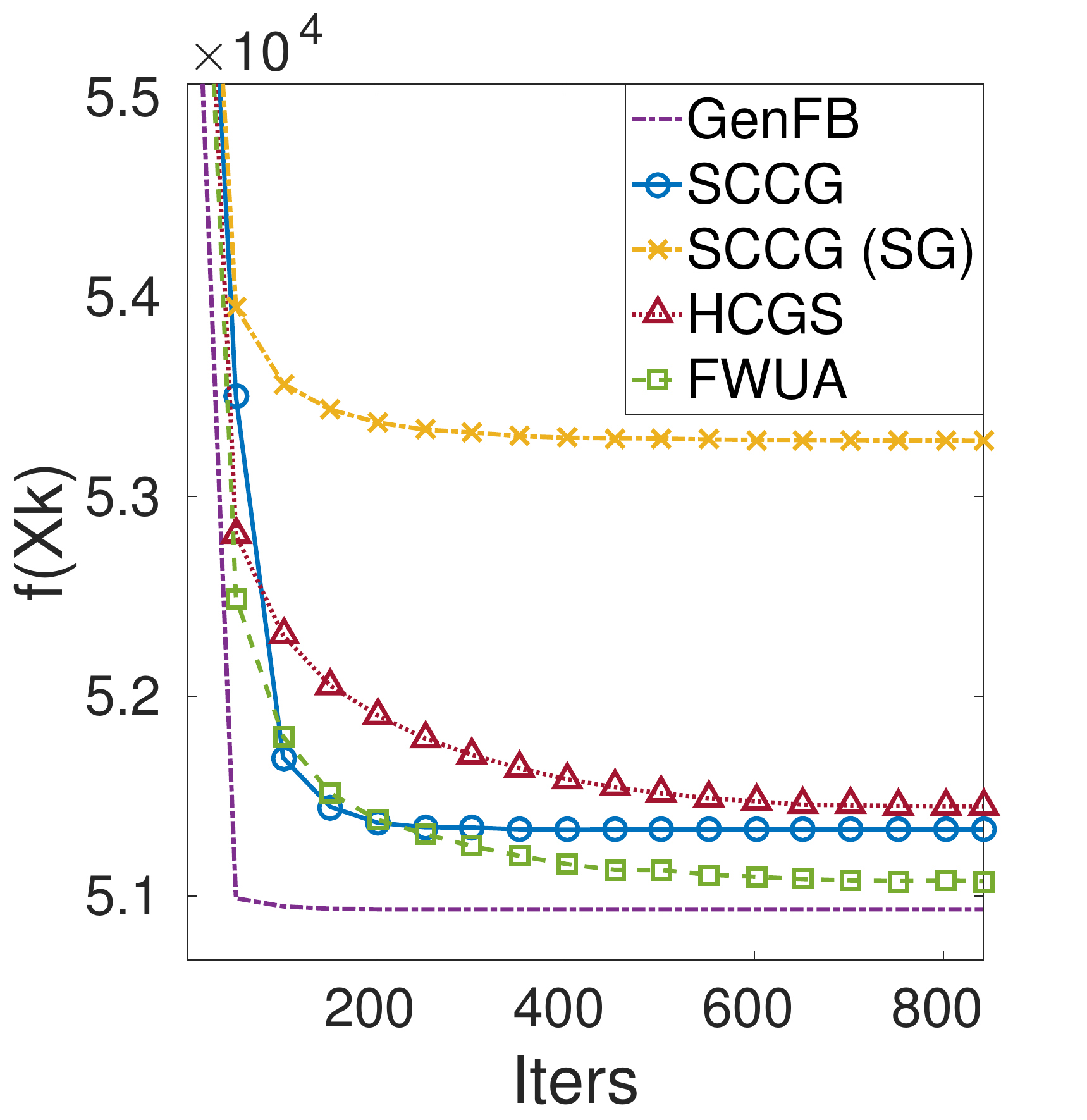}    
        \caption{$n=1250$}
    \end{subfigure}%

    \begin{subfigure}[b]{0.3\textwidth}
        \centering
        \includegraphics[width=\textwidth]{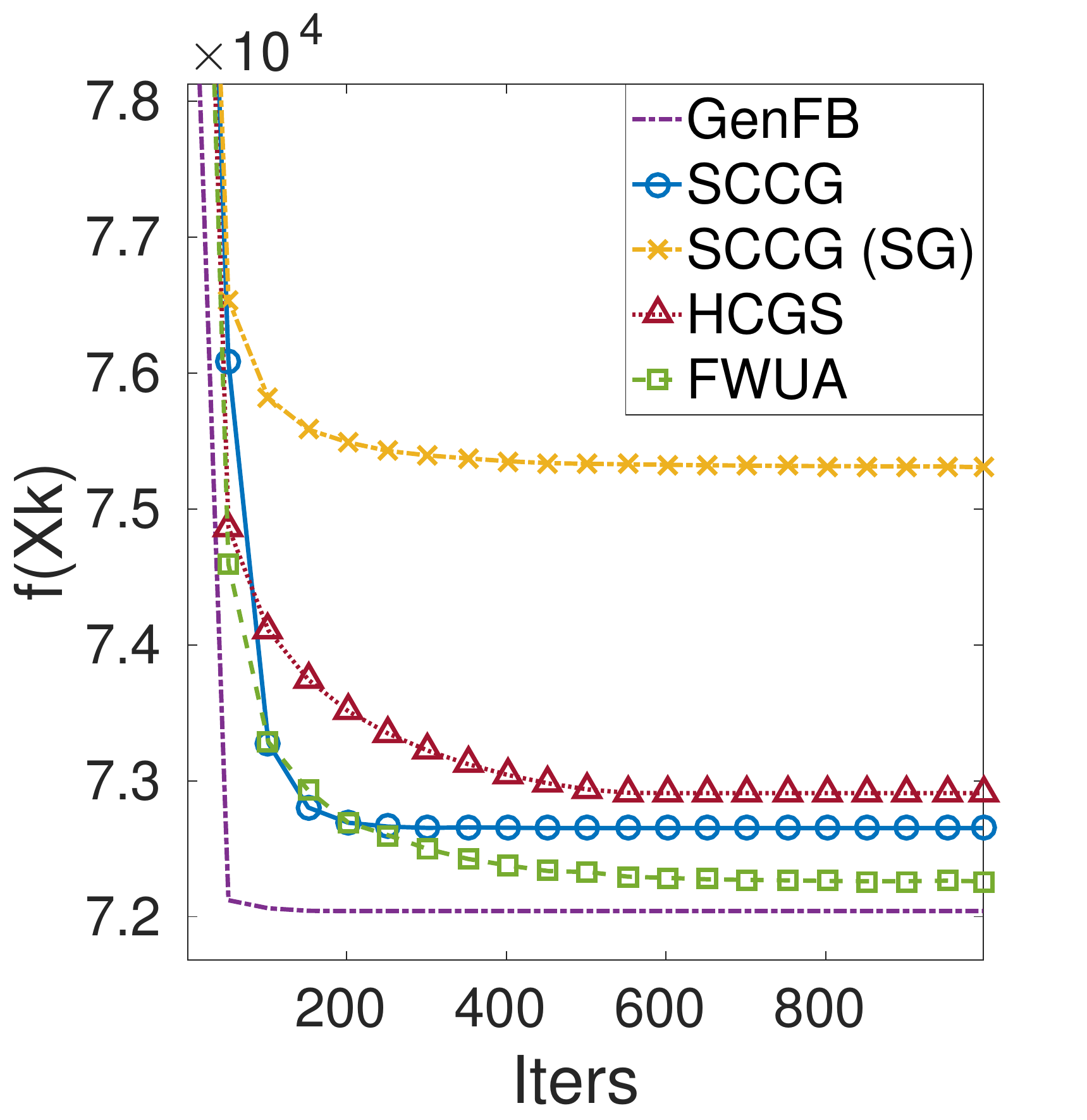}        
        \caption{$n=1500$}
    \end{subfigure}%
    \begin{subfigure}[b]{0.3\textwidth}
        \centering
        \includegraphics[width=\textwidth]{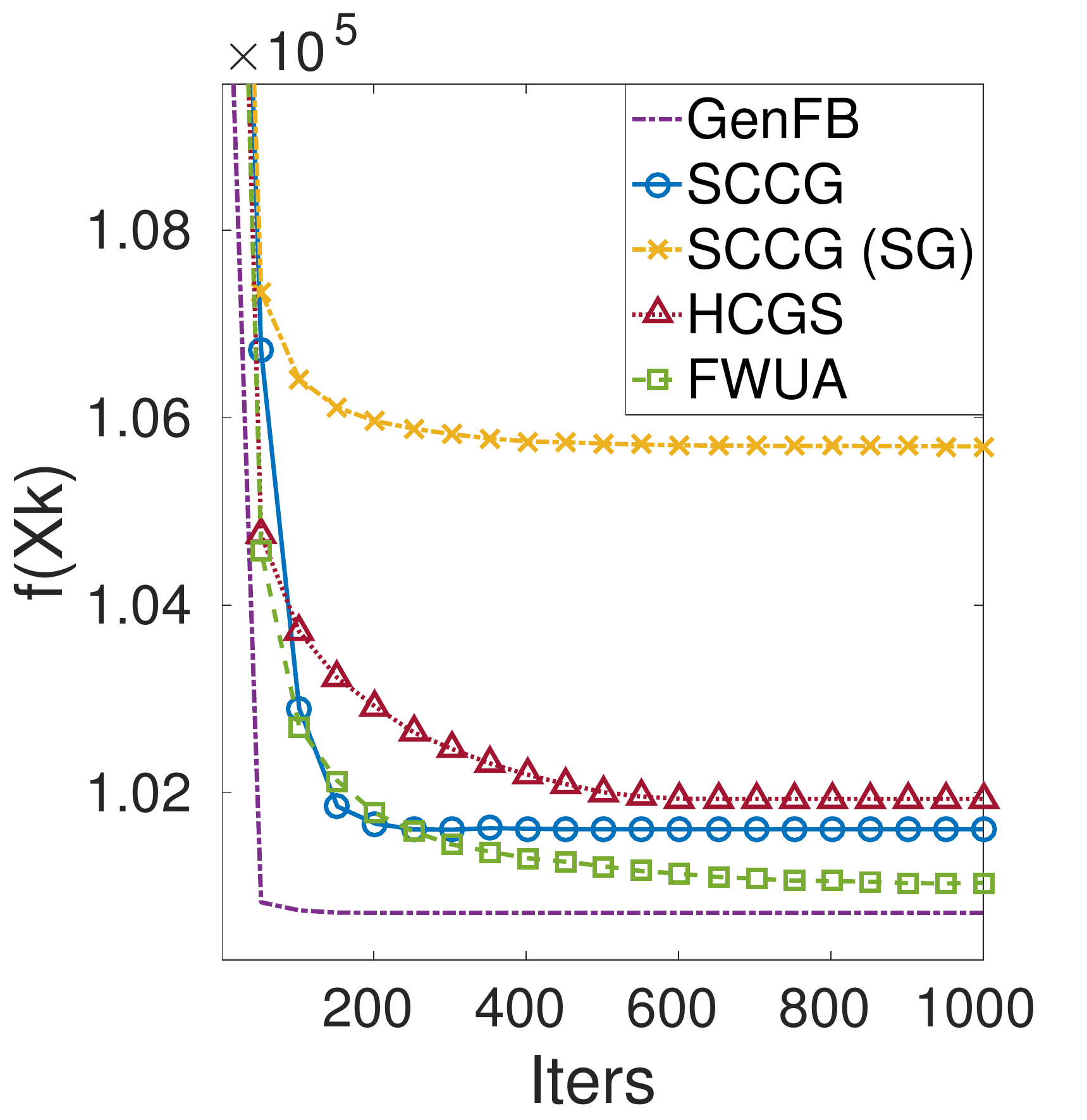}        
        \caption{$n=1750$}
    \end{subfigure}%
    ~
    \begin{subfigure}[b]{0.3\textwidth}
        \centering
        \includegraphics[width=\textwidth]{cov-n=2000.eps}        
        \caption{$n=2000$}
    \end{subfigure}%
    \caption{Convergence of various synthetic dataset sizes.}
\end{figure*}

\end{document}